\newtheorem{assumption}{Assumption}
\newtheorem{theorem}{Theorem}
\newtheorem{lemma}{Lemma}
\newtheorem{remark}{Remark}
\newcommand{\eps}{\epsilon}
\newtheorem{definition}{Definition}
\newcommand{\bE}{\mathbb{E}}
\newcommand{\cS}{\mathcal{S}}
\newcommand{\bN}{\mathbb{N}}
\newcommand{\cC}{\mathcal{C}}
\newcommand{\ust}{^{\star}}
\newcommand{\bR}{\mathbb{R}}
\newcommand{\te}{\theta}
\newcommand{\cI}{\mathcal{I}}
\newcommand{\bP}{\mathbb{P}}
\newcommand{\cE}{\mathcal{E}}
\newcommand{\cG}{\mathcal{G}}
\newcommand{\cB}{\mathcal{B}}
\newcommand{\cA}{\mathcal{A}}
\newcommand{\cL}{\mathcal{L}}
\newcommand{\ur}{^{(R)}}
\newcommand{\uci}{^{(i)}}
\newcommand{\id}{\mathbbm{1}}
\newcommand{\up}{^{\prime}}
\newcommand{\cF}{\mathcal{F}}
\newcommand{\nal}[1]{\begin{align*}#1\end{align*}}
\newcommand{\ali}[1]{\begin{align}#1\end{align}}
\newcommand{\alig}[1]{\begin{align}#1\end{align}}
\title{Learning in Markov Decision Processes under Constraints}
\begin{document}
	\author{Rahul Singh, \IEEEmembership{Member, IEEE}, Abhishek Gupta, and Ness Shroff , \IEEEmembership{Fellow, IEEE}
		\thanks{Rahul Singh  is with the
			Department of ECE, Indian Institute of Science, \\
			Bengaluru, Karnataka, India. email: rahulsingh@iisc.ac.in.}
		\thanks{           Abhishek Gupta and Ness Shroff are with the
			Department of ECE, Ohio State University,
			Columbus, OH, USA. email: gupta.706@osu.edu,shroff@ece.osu.edu}}

	\maketitle

	\date{Received: date / Accepted: date}
	
\maketitle
\begin{abstract}
We consider reinforcement learning (RL) in Markov Decision Processes in which an agent repeatedly interacts with an environment that is modeled by a controlled Markov process. At each time step $t$, it earns a reward, and also incurs a cost-vector consisting of $M$ costs. We design model-based RL algorithms that maximize the cumulative reward earned over a time horizon of $T$ time-steps, while simultaneously ensuring that the average values of the $M$ cost expenditures are bounded by agent-specified thresholds $c^{ub}_i,i=1,2,\ldots,M$. 

In order to measure the performance of a reinforcement learning algorithm that satisfies the average cost constraints, we define an $M+1$ dimensional regret vector that is composed of its reward regret, and $M$ cost regrets. The reward regret measures the sub-optimality in the cumulative reward, while the $i$-th component of the cost regret vector is the difference between its $i$-th cumulative cost expense and the expected cost expenditures $Tc^{ub}_i$. 

We prove that the expected value of the regret vector of UCRL-CMDP, is upper-bounded as $\tilde{O}\left(T^{2\slash 3}\right)$, where $T$ is the time horizon. We further show how to reduce the regret of a desired subset of the $M$ costs, at the expense of increasing the regrets of rewards and the remaining costs. To the best of our knowledge, ours is the only work that considers non-episodic RL under average cost constraints, and derive algorithms that can~\emph{tune the regret vector} according to the agent's requirements on its cost regrets.
\end{abstract}
\section{Introduction}\label{sec:intro}
Reinforcement Learning (RL)~\cite{DBLP:books/lib/SuttonB98} involves an agent repeatedly interacting with an environment modelled by a Markov Decision Process (MDP)~\cite{puterman2014markov}. More specifically, consider a controlled Markov process~\cite{puterman2014markov} $s_t,~t=1,2,\ldots,T$. At each discrete time $t$, an agent applies control $a_t$. State-space, and action space are denoted by $\cS$ and $\cA$ respectively, and are assumed to be finite. 
The controlled transition probabilities are denoted by $p:= \{ p(s,a,s^{\prime}) : s,s^{\prime}\in \cS, a\in\cA\}$. Thus, $p(s,a,s\up)$ is the probability that the system state transitions to state $s\up$ upon applying action $a$ in state $s$. The probabilities $p(s,a,s\up)$ are not known to the agent. At each discrete time $t=1,2,\ldots,T$, the agent observes the current state of the environment $s_t$, applies control action $a_t$, and earns a reward $r_t$ that is a known function of $(s_t,a_t)$. When the agent applies an action $a$ in the state $s$, then it earns a reward equal to $r(s,a)$ units. The agent does not know the controlled transition probabilities $p(s,a,s\up)$ that describe the system dynamics of the environment. The performance of an agent or a RL algorithm is measured by the cumulative rewards that it earns over the time horizon. 

However in many applications, in addition to earning rewards, the agent also incurs costs at each time. The underlying physical constraints impose constraints on its cumulative cost expenditures, so that the agent needs to balance its reward earnings with the cost accretion while also simultaneously learning the choice of optimal decisions, all in an \emph{online manner}. \par
As a motivating example, consider a single-hop wireless network that consists of a wireless node that transmits data packets to a receiver over an unreliable wireless channel. The channel reliability, i.e., the probability that a transmission at time-step $t$ is successful, depends upon the instantaneous channel state $cs_t$ and the transmission power $a_t$. Thus, for example, this probability is higher when the channel is in a good state, or if transmission is carried out at higher power levels. The transmitter stores packets in a buffer, and its queue length at time $t$ is denoted by $Q_t$. The wireless node is battery-operated, and packet transmission consumes power. Hence, it is desired that the average power consumption is minimal. An appropriate performance metric for networks is the average queue length $\left(\bE \sum_{t=1}^{T} Q_t\right)\slash T$~\cite{sennott2009stochastic}, and hence it is required that the average queue length stays below a certain threshold. The AP has to choose $a_t$ adaptively so as to minimize the power consumption $\left(\bE \sum_{t=1}^{T} a_t\right)\slash T$, or equivalently maximize $\left(\bE \sum_{t=1}^{T} -a_t\right)\slash T$, while simultaneoulsy ensure that the average queue length is below a user-specified threshold, i.e. $\left(\bE \sum_{t=1}^{T} Q_t\right)\slash T\le c^{ub}$. In this example, the state of the ``environment'' at time $t$  is given by the queue length and the channel state $(Q_t,cs_t)$. Thus, it might be ``optimal'' to utilize high transmission power levels only when the instantaneous queue length $Q_t$ is large or the wireless channel's state $cs_t$ is good. Such an adaptive strategy saves energy by transmitting at lower energy levels at other times. Since channel reliabilities are typically not known to the transmitter node, it does not know the transition probabilities $p(s,a,s\up)$ that describe the controlled Markov process $(Q_t,cs_t)$. Hence, it cannot compute the expectations of the average queue lengths and average power consumption for a fixed control policy, and needs to devise appropriate learning policies to optimize its performance under average-cost constraints. RL algorithms that we propose in this work solve exactly these classes of problems. Infact many important network control problems can be solved within the framework of constrained Markov decision processes (CMDPs). For example,~\cite{lazar1983optimal,hsiao1991optimal} utilize CMDPs in order to maximize the throughput in a stochastic network, where the network operator wants to satisfy constraints on delays,~while ~\cite{nain1986optimal} design policies that make dynamic decisions regarding network access in networks shared by different types of traffic. Similarly, it has been used in~\cite{singh2018throughput,singh2021adaptive} in order to maximize the timely throughput\footnote{Throughput derived from those packets which reach their destination within their deadline.} in stochastic networks.~If the network\slash system parameters are unknown, then the CMDP can be posed as a linear program (LP), and solved efficiently. However, in practice, network parameters are seldom known to the network operator, and it needs to design algorithms which ``learn'' the optimal policies in an ``optimal'' manner. Our work addresses precisely this issue.

\section{Previous Works and Our Contributions}

\textit{RL Algorithms for unconstrained MDPs}: RL problems without constraints are well-understood by now. Works such as~\cite{brafman2002r,auer2007logarithmic,bartlett2009regal,jaksch2010near} develop algorithms using the principle of ``optimism under uncertainty.'' UCRL2 of~\cite{jaksch2010near} is a popular RL algorithm that has a regret bound of $\tilde{O}(D(p)S\sqrt{AT})$, where $D(p)$ is the diameter~\cite{jaksch2010near} of the MDP $p$; the algorithms proposed in our work are based on UCRL2.

\vspace{.5cm}

\textit{RL Algorithms for Constrained MDPs}: \cite{altman1991adaptive} is an early work on optimally controlling unknown MDPs under average cost constraints. It utilizes the certainty equivalence (CE) principle, i.e., it applies controls that are optimal under the assumption that the true (but unknown) MDP parameters are equal to the empirical estimates, and also occasionally resorts to ``forced explorations.'' This algorithm yields asymptotically (as $T\to\infty$) the same reward rate as the case when the MDP parameters are known. However, analysis is performed under the assumption that the CMDP is \emph{strictly feasible}. Moreover the algorithm lacks finite-time performance guarantees (bounds on regret). Unlike~\cite{altman1991adaptive}, we do not assume strict feasibility; infact we show that the use of \emph{confidence bounds} allows us to get rid of the strict feasibility assumption.~\cite{borkar2005actor} derives a learning scheme based on multi time-scale stochastic approximation~\cite{borkar1997stochastic}, in which the task of learning an optimal policy for the CMDP is decomposed into that of learning the optimal value of the dual variables, which correspond to the price of violating the average cost constraints, and that of learning the optimal policy for an unconstrained MDP parameterized by the dual variables. However, the proposed scheme lacks finite-time regret analysis, and might suffer from a large regret. Prima facie, this layered decomposition might not be optimal with respect to the sample-complexity of the online RL problem. The works~\cite{achiam2017constrained,liu2019ipo,tessler2018reward,uchibe2007constrained} design policy-search algorithms for constrained RL problems. However unlike our work, they do not utilize the concept of regret vector, and their theoretical guarantees need further research. After the first draft of our work was published online, there appeared a few manuscripts\slash works that address various facets of learning in CMDPs, and these have some similarity with our work. For example~\cite{qiu2020upper} considers episodic RL problems with constraints in which the reward function is time-varying. Similarly,~\cite{efroni2020exploration} also considers episodic RL in which the state is reset at the beginning of each episode. In contrast, we deal exclusively with non-episodic infinite horizon RL problems.~In fact, as we show in our work, the primary difficulty in non-episodic constrained RL arises due to the fact that it is not possible to simultaneously ``control\slash upper-bound'' the reward and $M$ costs during long runs of the controlled Markov process. Consequently, in order to control the regret vector, we make the assumption that the underlying MDP is unichain. However, this problem does not occur in the episodic RL case~\cite{efroni2020exploration,qiu2020upper} since the state is reset. Secondly, unlike the algorithms provided in our work,~\cite{qiu2020upper,efroni2020exploration} do not allow the agent to tune the regret vector.

Our contributions are summarized as follows.
\begin{enumerate}
	\item We initiate the problem of designing RL algorithms that maximize the cumulative rewards while simultaneously satisfying average cost constraints. We propose an algorithm which we call UCRL for CMDPs, henceforth abbreviated as UCRL-CMDP. UCRL-CMDP is a modification of the popular RL algorithm UCRL2 of~\cite{jaksch2010near} that utilizes the principle of optimism in the face of uncertainty (OFU) while making decisions. Since an algorithm that utilizes OFU does not need to satisfy cost constraints (this is briefly discussed in Section~\ref{subsec:failure}), we modify OFU appropriately and derive the principle of~\emph{balanced optimism in the face of uncertainty} (BOFU). Under the BOFU principle, at the beginning of each RL episode, the agent has to solve for (i) an MDP, and (ii) a controller, such that the average costs of a system in which the dynamics are described by (i), and which is controlled using (ii), are less than or equal to the cost constraints. This is summarized in Algorithm~\ref{algo:ucb_cmdp}.
	\item In order to quantify the finite-time performance of an RL algorithm that has to perform under average cost constraints, we define its $M+1$ dimensional ``regret vector'' that is composed of its reward regret~\eqref{def:cumu_rew} and $M$ cost regrets \eqref{def:cumu_cost}. More precisely, considering solely the reward regret (as is done in the RL literature) overlooks the cost expenditures.~Indeed, we show in Theorem~\ref{th:regret_vector_result_1} that the reward regret can be made arbitrary small (with a high probability) at the expense of an increase in the cumulative cost expenditure. Thus, while comparing the performance of two different learning algorithms, we also need to compare their cost expenditures. The reward regret of a learning algorithm is the difference between its reward and the reward of an optimal policy that knows the MDP parameters, while the $i$-th cost regret is the difference between the total cost incurred until $T$ time-steps, and $c^{ub}_iT$. 
	\item We ask the following question in the constrained setup:~\emph{What is the set of ``achievable'' $M+1$ dimensional regret vectors?} In Theorem~\ref{regret:total} we show that the components of the regret vector of UCRL-CMDP, can be bounded as $\tilde{O}(T^{2\slash 3})$.
	\item We show that the use of BOFU allows us to overcome the shortcomings of the CE approach that were encountered in~\cite{altman1991adaptive}, i.e., there are arbitrarily long time-durations during which the CMDP in which the system dynamics are described by the current empirical estimates of transition probabilities is infeasible, and hence the agent is unable to utilize these estimates in order to make control decisions. As a by-product, BOFU also allows us to get rid of ``forced explorations,'' that were utilized in~\cite{altman1991adaptive}, i.e., employing randomized controls occasionally.
	\item Analogous to the unconstrained RL setup, in which one is interested in quantifying a lower bound on the regret of any learning algorithm, we provide a partial characterization of the set of those $M+1$-dimensional regret vectors, which cannot be achieved under any learning algorithm. More specifically,~in Theorem~\ref{th:lower_bound} we show that a weighted sum of the $M+1$ regrets is necessarily greater than $O\left( D(p)S\sqrt{AT\log(T)}\right)$, where $D(p)$ is the diameter of the underlying MDP, and $S,A$ is the number of states and control actions respectively.
	\item In many applications, an agent is more sensitive to the cost expenditures of some specific resources as compared to the rest, and a procedure to ``tune'' the $M+1$ dimensional regret vector is essential. In Section~\ref{sec:tune_regret}, we consider the scenario in which the agent can pre-specify the desired bounds on each component of the cost regret vector, and introduce a modification to the UCRL-CMDP that allows the agent to keep the cost regrets below these bounds.   
\end{enumerate}
\subsection{Failure of OFU in constrained RL problems}\label{subsec:failure}
Consider a two-state $\cS=\{1,2\}$, two-action $\cA=\{0,1\}$ MDP 
in which the controlled transition probabilities $p(1,1,1)=1-\theta$ and $p(1,1,2)=\theta$ are unknown, while remaining probabilitites are equal to $.5$. Assume that $r(1,a),c(1,a)\equiv 0$ and $r(2,a),c(2,a)\equiv 1$, i.e., reward and cost depend only upon the current state. Assume that $\theta>.5$, and the average cost threshold satisfies $c^{ub}<2\theta\slash (1+2\theta)$. Since state $2$ yields reward at the maximum rate, and $\theta>.5$ this means that the optimal action in state $1$ is $1$. Let $\hat{\theta}_t$ and $\epsilon_t$ denote the empirical estimate of $\theta$, and the radius of confidence interval respectively at time $t$. Then UCRL2 sets the optimistic estimate of $\theta$ equal to $\hat{\theta}_t + \epsilon_t$ and then implements the control that is optimal when true parameter value is equal to this estimate. Thus, if $\hat{\theta}_t + \epsilon_t \ge .5$, then it chooses action $1$ in state $1$. Since with a high probability we have $\hat{\theta}_t + \epsilon_t \ge \theta$, and
$\hat{\theta}_t + \epsilon_t  \to \theta$ as $T\to\infty$~\cite{jaksch2010near}, we have that when the index of the RL episode is sufficiently large, the agent implements action $1$ in state $1$. Since the average cost of this policy is $2\theta\slash (1+2\theta)$, this means that UCRL2 violates the average cost constraint. 

\section{Preliminaries}\label{sec:prelim}

In our setup, at each time $t$ the agent earns a reward and also incurs $M$ costs. Reward and cost functions are denoted by $r,\{c_i\}_{i=1}^{M}, \cS\times\cA\mapsto\bR$. Thus, the instantaneous reward obtained upon taking an action $a$ in the state $s$ is equal to $r(s,a)$, while the $i$-th cost is equal to $c_i(s,a)$. A controlled Markov process in which the agent earns reward and incurs $M$ costs is defined by the tuple $\mathcal{CMP}= (\cS,\cA,p,r,c_1,c_2,\ldots,c_M)$.
The probabilities $p(s,a,s\up)$ are not known to the agent, while the reward and cost functions $r,\{c_i\}_{i=1}^{M}, \cS\times\cA\mapsto\bR$ are known to the agent. We will now briefly discuss some notions and results on MDPs. 

Let $P^{(t)}_{\pi,p,x}$ denote the $t$-step probability distribution when the policy $\pi$ is applied to the MDP $p$ and the initial state is $x$, while $P_{\pi,p}$ be the corresponding stationary measure \footnote{Under the assumption that a unique stationary measure exists.}. For two measures $\mu_1,\mu_2$, we let $\|\mu_1 -\mu_2\|_{V}$ denote the total variation distance~\cite{villani2008optimal} between $\mu_1$ and $\mu_{2}$. 
\begin{definition}(Unichain MDP)\label{def:unichain}
	The MDP $p$ is unichain if under any stationary policy there is a single recurrent class. If an MDP is unichain~\cite{puterman2014markov}, then for the Markov chain induced by any stationary policy $\pi$, we have
	\begin{align}\label{condition:unichain}
		\|P^{(t)}_{\pi,p,x} - P_{\pi,p}  \|_{TV} \le C \rho^{t},~\forall x \in \cS,
	\end{align}
	where $C>0,\alpha<1$ are constants. The mixing time of an MDP $p$ is defined as $T_M(p) := \max_{\pi} \bE_{\pi,p} T_{s,s\up}$, where $T_{s,s\up}$ denotes the time taken by the Markov chain induced by policy $\pi$ to hit state $s\up$, when it starts in state $s$.
\end{definition}
\begin{definition}(Control Policy)
	Let 
	$$
	\Delta(\cA):=\left\{ \bm{x} \in \bR^{|\cA|}: \sum_{i=1}^{|\cA|} x_i =1, x_i \ge 0  \right\}
	$$ 
	be the $|\cA|$-simplex and $\cF_t$ denote the sigma-algebra~\cite{resnick2019probability} generated by the random variables  $\left\{(s_{\ell},a_\ell)\right\}_{\ell=1}^{t-1}\cup s_t$. A control policy $\pi$~\cite{kumar2015stochastic,puterman2014markov} is a collection of maps $\cF_t\mapsto \Delta(\cA), t=1,2,\ldots$ that chooses action $a_t$ on the basis of past operational history of the system. Thus, under policy $\pi$, we have that $a_t$ is chosen according to the probability distribution $\pi(\cF_t)$. A general control policy is allowed to be history-dependent and randomized. 
\end{definition}
\begin{definition}(Stationary Policy)
	A stationary policy $\pi:\cS\mapsto\Delta(\cA)$, is a mapping from state $\cS$ to a probability distribution on the action space $\cA$, and prescribes randomized controls on the basis of the current state $s_t$. Thus, under policy $\pi$, we have that $a_t$ is chosen according to the probability distribution $\pi(\cdot|s_t)$. 
\end{definition}
\subsection{Notation}
Throughout, we use bold font for denoting vectors; for example the vector $(x_1,x_2,\ldots,x_N)$ is denoted by $\bm{x}$. We use $\mathbb{N}$ to denote the set of natural numbers, $\bR^{M}$ to denote the $M$ dimensional Euclidean space, and $\bR^{M}_{+}$ to denote non-negative orthant of $\bR^{M}$. Inequalities between two vectors are to be understood component-wise. If $\cE$ is an event~\cite{resnick2019probability}, then $\id(\cE)$ denotes its indicator function. For a control policy $\pi$,\footnote{Assuming that the limit exists.} 
$$
\bar{r}(\pi):=\lim_{T\to\infty}\frac{1}{T} \bE_{\pi} \sum_{t=1}^{T} r(s_t,a_t)
$$ 
denotes its average reward, and 
$$
\bar{c}_i(\pi):= \lim_{T\to\infty}\frac{1}{T} \bE_{\pi} \sum_{t=1}^{T} c_i(s_t,a_t)$$
denotes its average $i$-th cost. For $\bm{x}\in\bR^{N}$, we let $\|\bm{x}\|_1$ denote its $1$-norm. $\bm{0}_{M}$ denotes the $M$-dimensional zero vector consisting of all zeros. For $x,y\in\bR$, we let $x\vee y:= \max \{ x,y\}$.~Throughout, we abbreviate $[M]:= \{1,2,\ldots,M\}$, $S:=|\cS|, A:= |\cA|$. 
\subsection{Constrained MDPs}\label{subsec:cmdp}
We now present some definitions and standard results pertaining to constrained MDPs. These can be found in~\cite{altman}.
\begin{definition}[Occupation Measure]
	Consider the controlled Markov process $s_t$ evolving under the application of a stationary policy $\pi$. Its occupation measure $
	\mu_{\pi}=\left\{ \mu_{\pi}(s,a):(s,a)\in\cS\times\cA\right\}$
	is defined as
	\nal{\mu_{\pi}(s,a):= \lim_{T\to\infty}\frac{1}{T} \bE_{\pi}\left(\sum_{t=1}^{T}\id\left(s_t=s,a_t=a\right)\right),
	}
	and describes the average amount of time that the process $(s_t,a_t)$ spends on each possible state-action pair.
\end{definition}
\begin{definition}[$SR(\mu)$]
	Consider a vector $\mu= \left\{\mu(s,a): (s,a)\in\cS\times\cA\right\}$ that satisfies the constraints~\eqref{lp_c1} and~\eqref{lp_c3} below. Define $SR(\mu)$ to be the following stationary randomized policy. When the state $s_t$ of the environment is equal to $s$, the policy chooses the action $a$ with a probability equal to $\frac{\mu(s,a)}{\sum_{a\up\in\cA }\mu(s,a^{\prime})}$ if $\sum_{a\up\in\cA }\mu(s,a^{\prime})>0$. However, if  $\sum_{a\up\in\cA }\mu(s,a^{\prime})=0$, then the policy takes an action according to some pre-specified rule (e.g. implement $a_t=0$).
\end{definition}

Consider the controlled Markov process $\mathcal{CMP}= (\cS,\cA,p,r,c_1,c_2,\ldots,c_M)$. The following dynamic optimization problem is a constrained Markov Decision Process (CMDP)~\cite{altman}, 
\begin{align}
	\max_{\pi} &\liminf_{T\to\infty }\frac{1}{T} \bE_{\pi}\sum_{t=1}^{T} r( s_t,a_t) \label{eq:adapt_obj}\\
	\mbox{ s.t. }&  \limsup_{T\to\infty }\frac{1}{T} \bE_{\pi}\sum_{t=1}^{T} c_i( s_t,a_t ) \leq c^{ub}_i\label{eq:adapt_constr}, i\in [M],
\end{align} 
where the maximization above is over the class of all history-dependent policies, and $c^{ub}_i$ denotes the desired upper-bound on the average value of $i$-th cost expense. The optimal average reward rate of the CMDP is equal to the optimal value of the above LP, and is denoted by $r\ust$.\par 
\emph{Linear Programming (LP) approach for solving CMDPs}: When the controlled transition probabilities $p(s,a,s\up)$ are known, and $p$ is unichain, an optimal policy for the CMDP~\eqref{eq:adapt_obj}-\eqref{eq:adapt_constr} can be obtained by solving the following linear program (LP)~\cite{altman},
\begin{align}
	&\max_{\mu=\{\mu(s,a):(s,a)\in\cS\times\cA \}} \sum_{(s,a)\in\cS\times\cA} \mu(s,a)r(s,a),\label{lp:1}\\
	&\mbox{ s.t. } \sum_{(s,a)\in\cS\times\cA} \mu(s,a)c_i(s,a) \le c^{ub}_i, i\in [M] \label{lp_c2}\\
	& \sum_{a\in\cA}\mu(s,a) = \sum_{(s^{\prime},b)\in\cS\times\cA}\mu(s^{\prime},b) p(s^{\prime},b,s),~~\forall s\in\cS,\label{lp_c1}\\
	&\mu(s,a)\ge 0,~~\forall (s,a)\in\cS\times\cA, \sum_{(s,a)\in\cS\times\cA} \mu(s,a) = 1\label{lp_c3}.
\end{align}  %
Let $\mu\ust$ be a solution of the above LP. Then, the stationary randomized policy $SR(\mu\ust)$ solves~\eqref{eq:adapt_obj}-\eqref{eq:adapt_constr}. Moreover, it can be shown that the average reward and $M$ costs of $SR(\mu\ust)$ are independent of the initial starting state $s_0$ if the MDP is unichain~\cite{altman}.
\subsection{Learning Algorithms and Regret Vector}
We will develop reinforcement learning algorithms to solve the finite-time horizon version of the CMDP~\eqref{eq:adapt_obj}-\eqref{eq:adapt_constr} when the probabilities $p(s,a,s\up)$ are not known to the agent. Let $\cF_t$ denote the sigma-algebra~\cite{resnick2019probability} generated by the random variables  $\left\{(s_{\ell},a_\ell)\right\}_{\ell=1}^{t-1}\cup s_t$. A learning policy $\pi$ is a collection of maps $\cF_t\mapsto \Delta(\cA), t=1,2,\ldots$ that chooses action $a_t$ on the basis of past operational history of the system. In order to measure the performance of a learning algorithm, we define its reward and cost regrets.~The ``cumulative reward regret'' until time $T$, denoted by $\Delta\ur(T)$, is defined as,
\begin{align}\label{def:cumu_rew}
	\Delta\ur(T):=  r^{\star}~T  -  \sum_{t=1 }^{T} r(s_t,a_t),
\end{align} 
where $r^{\star}$ is the optimal average reward of the CMDP~\eqref{eq:adapt_obj}-\eqref{eq:adapt_constr} when controlled transition probabilities $p(s,a,s\up)$ are known. Note that $r\ust$ is the optimal value of the LP~\eqref{lp:1}-\eqref{lp_c3}.~The ``cumulative cost regret" for the $i$-th cost until time $T$ is denoted by $\Delta\uci(T)$, and is defined as, 
\begin{align}\label{def:cumu_cost}
	\Delta\uci(T) :=  \sum_{t=1 }^{T} c_i(s_t,a_t)  -  c^{ub}_i~T.
\end{align} 
\begin{remark}
	In the conventional regret analysis of RL algorithms, the objective is to bound the reward regret $\Delta\ur(T)$. However, in our setup, due to considerations on the cost expenditures, we also need to bound the cost regrets $\Delta\uci(T)$. Indeed, as shown in the Section~\ref{sec:tune_regret}, we can force $\Delta\ur(T)$ to be arbitrarily small at the expense of increased cost regrets, and also vice versa. The consideration of the regret vector, and the possibility of tuning its various components, is a key novelty of our work. The problem of tuning this vector is challenging because its various components are correlated.
\end{remark}
%

\section{UCRL-CMDP: A Learning Algorithm for CMDPs}\label{sec:ucb_cmdp}

\begin{algorithm}
	\begin{algorithmic}
		\caption{UCRL-CMDP}
		\label{algo:ucb_cmdp}
		\State {\bfseries Input:} State-space $\cS$, Action-space $\cA$, Confidence parameter $\delta$, Time horizon $T$\\
		\State {\bfseries Initialize:} Set $t:=1$, and observe the initial state $s_1$.
		\For{ Episodes $k=1,2,\ldots$ }
		\State 
		\textbf{Initialize Episode }$\bm{k}$:
		\begin{enumerate}
			\item Set the start time of episode $k$, $\tau_k:=t$. For all state-action tuples $(s,a)\in\cS\times\cA$, initialize the number of visits within episode $k$, $n_{k}(s,a)=0$. 
			\item For all $(s,a)\in\cS\times\cA$ set $N_{\tau_k}(s,a)$, i.e., the number of visits to $(s,a)$ prior to episode $k$. Also set the transition counts $N_{\tau_k}(s,a,s^{\prime})$ for all $(s,a,s^{\prime})\in\cS\times\cA\times\cS$.
			\item Compute the empirical estimate $\hat{p}_t$ of the MDP as in~\eqref{eq:empirical_estimate_mdp}.
		\end{enumerate}
		\textbf{Compute Policy }$\bm{\tilde{\pi}_k}$:
		\begin{enumerate}
			\item Let $\cC_{\tau_k}$ be the set of plausible MDPs as in~\eqref{eq:ci_def}.
			\item Solve~\eqref{eq:ucb_cmdp_obj}-\eqref{eq:ucb_cmdp_c3} to obtain $\tilde{\pi}_k$. 
			\item In case ~\eqref{eq:ucb_cmdp_obj}-\eqref{eq:ucb_cmdp_c3} is infeasible, choose $\tilde{\pi}_k$ to be some pre-determined policy (chosen at time $t=0$).
		\end{enumerate}
		\State \textbf{Implement} $\bm{\tilde{\pi}_k}$:
		\While{$t-\tau_{k_t}< \lceil T^{\alpha}\rceil $}
		\State{
			\begin{enumerate}
				\item Sample $a_t$ according to the distribution $\tilde{\pi}_k(\cdot|s_t)$. Observe reward $r(s_t,a_t)$, and observe next state $s_{t+1}$.
				\item Update $n_k(s_t,a_t)=n_k(s_t,a_t)+1$.
				\item Set $t:=t+1$.
			\end{enumerate}
		}
		\EndWhile
		\EndFor
	\end{algorithmic}
\end{algorithm}
We propose UCRL-CMDP to adaptively control an unknown CMDP. It is depicted in Algorithm~\ref{algo:ucb_cmdp}. UCRL-CMDP maintains empirical estimates of the unknown transition probabilities as follows,~
\alig{
	\hat{p}_{t}(s,a,s^{\prime}) = \frac{N_{t}(s,a,s^{\prime}) }{N_{t}(s,a) \vee 1 }, \forall s,s^{\prime}\in\mathcal{S}, a\in\mathcal{A},\label{eq:empirical_estimate_mdp}
}
~where $N_{t}(s,a)$ and $N_{t}(s,a,s^{\prime})$ 
denote the number of visits to $(s,a)$ and $(s,a,s\up)$ until $t$ respectively.

\emph{Confidence Intervals}: Additionally, it also maintains confidence interval $\cC_t$ associated with the estimate $\hat{p}_t$ as follows,
\begin{align}\label{eq:ci_def}
	& \cC_t  := \notag\\
	&\Big\{p\up :
	| p\up(s,a,s\up) -\hat{p}_{t}(s,a,s\up)| \leq \epsilon_t(s,a), \forall (s,a) \Big\},
\end{align}
where
\begin{align}\label{eq:ci}
	\epsilon_t(s,a):=  \sqrt{ \frac{2\log\left(T^{b} |\cS||\cA|\right) }{N_t(s,a) \vee 1 } },
\end{align}
$b>1$ is a constant.
\\
\emph{Episode}: UCRL-CMDP proceeds in episodes, and utilizes a single stationary control policy within an episode. Each episode is of duration $\lceil T^{\alpha}\rceil$ steps.
Let $\tau_k$ denote the start time of episode $k$. $k$-th episode is denoted by $\mathcal{E}_k := \left\{\tau_k,\tau_k +1,\ldots,\tau_{k+1} -1\right\}$, and comprises of $\tau_{k+1}-\tau_{k}$ consecutive time-steps. Denote by $k_t$ the index of the ongoing episode at time $t$.~At the beginning of $\cE_k$, the agent solves the following \emph{constrained} optimization problem in which the decision variables are
(i) Occupation measure $\mu=\{\mu(s,a):(s,a)\in\cS\times\cA\}$ of the controlled process, and (ii) ``Candidate'' MDP $p^{\prime}$,
\begin{align}
	&\max_{\mu,p^{\prime}} \sum_{(s,a)\in\cS\times\cA} \mu(s,a)r(s,a),\label{eq:ucb_cmdp_obj}\\
	&\mbox{ s.t. }  \sum_{(s,a)\in\cS\times\cA} \mu(s,a)c_i(s,a) \le c^{ub}_i, i\in [M] \label{eq:ucb_cmdp_c2}\\
	&\sum_{a\in\cA}\mu(s,a) = \sum_{(s^{\prime},b)}\mu(s^{\prime},b) p^{\prime}(s^{\prime},b,s),~~\forall s\in\cS,\label{eq:ucb_cmdp_c1}\\
	&\mu(s,a)\ge 0~~\forall (s,a), \sum_{(s,a)} \mu(s,a) = 1,\\
	&~~\qquad p\up \in \cC_{\tau_{k}}. \label{eq:ucb_cmdp_c3} 
\end{align}    
The maximization w.r.t. $p^{\prime}$ denotes that the agent is optimistic regarding the belief of the ``true'' (but unknown) MDP $p$, while that w.r.t. $\mu$ ensures that the agent optimizes its control strategy for this optimistic MDP. The constraints~\eqref{eq:ucb_cmdp_c2} ensure that the cost expenditures do not exceed the thresholds $\{c^{ub}_i\}_{i=1}^{M}$, and hence ensure that the agent also balances the cost expenses while being optimistic with respect to the rewards about the choice of the MDP thereby taking a balanced approach to optimism when the underlying MDP parameters are unknown. If the constraints~\eqref{eq:ucb_cmdp_c2} were absent, we would recover the UCRL2 algorithm of~\cite{jaksch2010near} that is based on the OFU principle~\cite{lai1985asymptotically,agrawal1995sample}. However, as is shown in Section~\ref{subsec:failure}, the OFU principle might fail when it is applied for learning the optimal controls for CMDPs. Indeed, as is shown in the example in Section~\ref{subsec:failure}, the limiting average cost is greater than the threshold value of cost. The BOFU principle proposed in this work is a natural extension of the OFU principle to the case when the agent has to satisfy certain constraints on costs, in addition to maximizing the rewards.~In case the problem~\eqref{eq:ucb_cmdp_obj}-\eqref{eq:ucb_cmdp_c3} is feasible, let $(\tilde{\mu}_k,\tilde{p}_k)$ denote a solution. The agent then chooses $a_t$ according to $SR(\tilde{\mu}_k)$ within $\cE_k$. However, in the event the LP~\eqref{eq:ucb_cmdp_obj}-\eqref{eq:ucb_cmdp_c3} is infeasible, the agent implements an arbitrary stationary control policy that has been chosen at time $t=0$. In summary, it implements a stationary controller within $\cE_k$, which is denoted by $\tilde{\pi}_k$. We make the following assumptions on the MDP $p$ while analyzing UCRL-CMDP.
\begin{assumption}\label{assum:1}
	\begin{enumerate}
		\item The MDP $p=\left\{p(s,a,s^{\prime}) : s,s^{\prime}\in \cS,a\in\cA  \right\}$ is unichain. Thus, under a stationary policy $\pi$ we have
		\begin{align}\label{condition:unichain_1}
			\|P^{(t)}_{\pi,p,x} - P_{\pi,p}  \|_{TV} \le C \rho^{t},~t=1,2,\ldots,
		\end{align}
		where $C>0,0\le \rho<1$.
		\item The CMDP~\eqref{eq:adapt_obj}-\eqref{eq:adapt_constr} is feasible, i.e., there exists a policy under which the average cost constraints~\eqref{eq:adapt_constr} are satisfied.
		\item  Without loss of generality, we assume that the magnitude of rewards and costs are upper-bounded by $1$, i.e.,
		\begin{align*}
			|r(\cdot,\cdot)|,| c_i(\cdot,\cdot) | <1.
		\end{align*}
		Hence, if $r^{\star}$ denotes optimal reward rate of~\eqref{eq:adapt_obj}-\eqref{eq:adapt_constr}, then $r^{\star}<1$. Moreover, the cost bounds $\{c^{ub}_i\}_{i=1}^{M}$ can be taken to be less than $1$.  
	\end{enumerate}
\end{assumption}

We establish the following bound on the regrets of UCRL-CMDP.
\begin{theorem}\label{regret:total}
	Consider the UCRL-CMDP (Algorithm~\ref{algo:ucb_cmdp}) applied with $\delta = 1\slash T$ to an MDP $p$ that satisfies Assumption~\ref{assum:1}. The reward and cost regrets can be bounded as follows,
	\begin{align}
		& \bE \Delta\ur(T),\bE \Delta\uci(T),~i\in [M] \notag\\
		&\le 4 T_M \Bigg((\sqrt{2}+1)\sqrt{SAT}+ T^{\beta} \sqrt{\log\Bigg( \frac{SAT}{\delta}\Bigg)}\Bigg)\notag \\
		&+\frac{C\lceil T^{1-\alpha}\rceil}{1-\rho}+\delta T + \frac{2}{T^{2b-2}|\cS||\cA|},
	\end{align}
	where $\beta > 1\slash 2$ satisfies $2\beta - \alpha = 1$. Upon letting $\alpha = 1\slash 3$ in the above, the above reduces to
	\begin{align}
		& \bE \Delta\ur(T),\bE \Delta\uci(T),~i\in [M] \notag\\
		&\le 4 T_M\Bigg((\sqrt{2}+1)\sqrt{SAT}+ T^{2\slash 3} \sqrt{\log\Bigg( \frac{SAT}{\delta}\Bigg)}\Bigg)\notag \\
		&+\frac{C\lceil T^{2\slash 3}\rceil}{1-\rho}+\delta T + \frac{2}{T^{2b-2}|\cS||\cA|},
	\end{align}
\end{theorem}

\section{Proof of Theorem~\ref{regret:total}}
We begin by introducing few notation.~If $\cB$ denotes a subset of $\cS$, then we let $\Pi_{\cB}$ be the set of those policy $\pi$ for which the occupation measure $\mu_{\pi}$ satisfies $\mu_{\pi}(s) >0$ for all $s \in \cB$. Let $\cB_{\pi}$ denote the set of states for which $\mu_{\pi}(s)>0$.~We now derive few preliminary results that are used while proving the main result.

The following result can be shown by an application of Azuma-Hoeffding inequality~\cite{azuma1967weighted}.
\begin{lemma}\label{lemma:conc}
	$$
	\bP(p\in \cC_{\tau_k}) > 1 - \frac{1}{T^{b-1}},
	$$
	where the confidence ball $\cC_{\tau_k}$ is as in~\eqref{eq:ci_def}. Define the set $\cG_1 := \{ p\in \cC_{\tau_k},~\forall k =1,2,\ldots,K \}$. Then,
	$$
	\bP(\cG_1) \ge 1 - \frac{1}{T^{b-1-(1-\alpha)}}.
	$$
\end{lemma}
\begin{IEEEproof}
If the number of visits to $(s,a)$ were determinisic, denoted by $N(s,a)$, and $N(s,a,s\up)$ the corresponding number of visits to $(s,a,s\up)$, then it follows from Azuma-Hoeffding's inequality~\cite{azuma1967weighted} that
\nal{
\bP(| N_{t}(s,a) p(s,a,s\up) -N_{t}(s,a,s\up) | \ge \eps) \le \exp\left(-\frac{\eps^2}{2N(s,a)}\right). 
}	
Upon letting $\eps$ equal to $\sqrt{2N(s,a)\log\left(T^{b} |\cS||\cA|\right)  }$ in the above, we get that 
$|p(s,a,s\up)-p(s,a,s\up|\le  \sqrt{ \frac{2\log\left(T^{b} |\cS||\cA|\right) }{N(s,a) \vee 1 } }$ holds with a probability greater than $\frac{1}{T^{b} |\cS||\cA|}$. The proof is then completed by considering all possibilities for $N(s,a)$, state-action pairs and using union bound. 
\end{IEEEproof}

\begin{lemma}\label{lemma:conc_visits}
Define 
\alig{
	& \cG_2 := \Bigg\{ \sum_{k=1}^{K} \frac{n_\ell(s,a) - \bE(n_\ell(s,a) | \cF_{\tau_\ell})}{\sqrt{N_{\ell}(s,a) }} \le T^{\beta} \sqrt{\log\Bigg( \frac{SAT}{\delta}\Bigg)},\notag\\
	& \forall (s,a) \in \cS \times \cA\Bigg\},\label{def:g2}
}
where $K$ is the total number of episodes and $\beta > 1\slash 2$ satisfies $2\beta - \alpha = 1$.~We have
$$
\bP(\cG_2) \ge 1-\delta.
$$
\end{lemma}
\begin{IEEEproof}
	Note that $\frac{ n_k(s,a) - \bE(n_k(s,a) | \cF_{\tau_k})}{\sqrt{N_{k}(s,a)}},~k =1,2,\ldots,K$ is a martingale difference sequence. Furthermore, since the duration of each episode is bounded by $T^{\alpha}$, and $\sqrt{N_{k}(s,a)}\ge 1$, we have $\frac{ n_k(s,a) - \bE(n_k(s,a) | \cF_{\tau_k})}{\sqrt{N_{k}(s,a)}}\le T^{\alpha}$.~By applying Azuma-Hoeffding's inequality to this martingale difference sequence, we get that the probability of the event $\sum_{k=1}^{K}\frac{ n_k(s,a) - \bE(n_k(s,a) | \cF_{\tau_k})}{\sqrt{N_{k}(s,a)}}\ge T^{\beta} \sqrt{\log\Bigg( \frac{SAT}{\delta}\Bigg)}$ can be upper-bounded by 
	$$
	\exp\left( - \frac{T^{2\beta}\log\Bigg( \frac{SAT}{\delta}\Bigg)}{ T^{1-\alpha} T^{2\alpha} }  \right)=\exp\left( -T^{2\beta-(1+\alpha)}\log\Bigg( \frac{SAT}{\delta}\Bigg)\right).
	$$
	Since $2\beta -(1+\alpha)=0$, the above bound reduces to $\frac{\delta}{SAT}$.~The proof then follows by using union bound for all state-action pairs and $k$.
\end{IEEEproof}

\begin{lemma}\label{lemma:suff_visits}
	If $s\in \cB_{\pi_k}$, then 
	$$
	\bE\left\{ n_k(s,a) | \cF_{\tau_k}\right\}\ge \Bigg\lfloor \frac{T^{\alpha}}{2T_M}\Bigg\rfloor \times \frac{\pi_k(a|s)}{2}.
	$$
\end{lemma}

\begin{IEEEproof}
Since we have $\bE_{\pi} T_{s\up,s}\le T_M,~\forall s\up \in \cS$, it follows from Markov's inequality that the probability with which $s_t$ does not hit the state $s$ in $2T_M$ steps, is less than $1\slash2$, or equivalently the state $s$ is visited atleast once with a probability greater than $1\slash 2$,~which yields us the following,
\alig{
	\min_{s\up\in\cS} \bE_{\pi} \Big\{\sum_{t=1}^{\lceil 2T_M\rceil} \id\{s_t = s\} | s_0 = s\up \Big\} \ge \frac{1}{2}.\label{eq:visit_ineq}
}

 The proof is then completed by dividing the total time of $\lceil T^{\alpha} \rceil$ steps in an episode into ``mini-episodes'' of $\lceil 2T_M\rceil$ steps each, and noting that $n_k(s,a)$ is the sum of number of visits to $(s,a)$ during each such mini-episode.
\end{IEEEproof}
We begin by giving an equivalent characterization of the UCRL-CMDP rule.~At each $\tau_k$, it assigns an index $\cI_k(\pi)$ to each stationary policy $\pi$ as follows,
\nal{
\cI_k(\pi) :=  \max_{\te \in \cC_{\tau_k}} \Big\{ \bar{r}(\pi,\te): \bar{c}_i(\pi,\te) \le c^{ub}_i,~i\in [M]\Big\}.
}
In case the above optimization problem is infeasible, i.e. $\bar{c}_i(\pi,\te) > c^{ub}_i,~\forall \te\in\cC_{\tau_{k}}$ for some $i$,~then the policy is assigned an index of $-\infty$. ~It then implements a policy with the largest index during $\cE_k$.

Define the ``good set" $\cG := \cG_1 \cap \cG_2$. 
\begin{lemma}\label{lemma:bound_ball}
	On the set $\cG$ we have the following for $\te\in\cC_{\tau_{k}}$,
	\alig{
		& |\bar{r}(\pi,p) - \bar{r}(\pi,\te)|, |\bar{c}_i(\pi,p) - \bar{c}_i(\pi,\te)|~i\in [M] \notag\\
		&\le 2\max_{s}\sum_{a\in \cA} \pi(a|s)\epsilon_{\tau_k}(s,a).\label{ineq:2}
	}
\end{lemma}
\begin{IEEEproof}
Note that $P^{(1)}_{\pi,p,s},s\in\cS$ denotes the transition probabilities of the Markov chain that results when the policy $\pi$ is applied to the MDP $p$.~Consider an MDP $\te \in \cC_{\tau_{k}}$.~Since on $\cG$, we have $p  \in \cC_{\tau_{k}}$, we get the following,
\nal{
&\|P^{(1)}_{\pi,\hat{p}_{\tau_{k}},s}- P^{(1)}_{\pi,p,s}\|_{\infty},\|P^{(1)}_{\pi,\hat{p}_{\tau_{k}},s}- P^{(1)}_{\pi,\te,s}\|_{\infty}\\
& \le \sum_{a\in\cA} \pi(a|s)\epsilon_{\tau_k}(s,a),
}
so that from triangle inequality we have that 
\alig{
\|P^{(1)}_{\pi,p,s} - P^{(1)}_{\pi,\te,s}\|_{\infty} \le 2\sum_{a\in \cA} \pi(a|s)\epsilon_{\tau_k}(s,a).\label{ineq:1}
}
\eqref{ineq:2} then follows from Theorem~\ref{th:mitro}.
\end{IEEEproof}
For a policy $\pi$, we denote the quantity $r\ust -\bar{r}(\pi,p)$ by its instantaneous reward regret, and $\bar{c}_i(\pi,p)-c^{ub}_i$ its instantaneous cost regret for the $i$-th cost.~We now show that if the instantaneous reward regret, or an instantaneous cost regret of a policy is greater than a certain threshold (that depends upon the radius of the confidence ball at time $\tau_k$), then it is not played during $\cE_k$.~For a stationary policy $\pi$, define
$$
\delta_{k}(\pi) := 2\max_{s\in \cB_{\pi}} \sum_{a\in \cA} \pi(a|s)\epsilon_{\tau_k}(s,a).
$$

Consider the following two possibilities.

Case A) $\bar{c}_i(\pi,p)>c^{ub}_i+\delta_{k}(\pi)$ for some $i$: From~\eqref{ineq:2} we have that $|\bar{c}_i(\pi,p) - \bar{c}_i(\pi,\te)|\le \delta_{k}(\pi)$ which implies $\bar{c}_i(\pi,\te) > c^{ub}_i$ for all $\te\in \cC_{\tau_{k}}$. Thus $\cI_k(\pi)=-\infty$. 

Case B) From~\eqref{ineq:2} we have that $|\bar{r}(\pi,p) - \bar{r}(\pi,\te)| \le \delta_{k}(\pi)$ for all $\te\in\cC_{\tau_{k}}$, so that the index $\cI_k(\pi)$ is bounded by $\bar{r}(\pi,p)+\delta_{k}(\pi)$. 

The following result summarizes this discussion.

\begin{lemma}\label{lemma:ub}
	Let $\pi$ be a stationary randomized policy. On the set $\cG$ we have that $\cI_k(\pi)=-\infty$ if $\bar{c}_i(\pi,p)>c^{ub}_i+\delta_{k}(\pi),~\mbox{for some } i\in [M]$. Also, $\cI_k(\pi)\le \bar{r}(\pi,p)+\delta_{k}(\pi)$.
\end{lemma}

We now show that if a stationary policy is feasible for the MDP $p$, i.e. $\bar{c}_i(\pi,p)\le c^{ub}_i,~\forall i$, then its index $\cI_k(\pi)$ is lower-bounded by $r\ust$.
\begin{lemma}\label{lemma:index_lb}
	If $\pi$ is feasible for the true MDP, i.e. it satisfies $\bar{c}_i(\pi,p)\le c^{ub}_i,~\forall i\in[M]$, then on $\cG$ its index satisfies $\cI_k(\pi) \ge \bar{r}(\pi,p)$. With $\pi$ set equal to the policy which solves the CMDP $\max_{\pi}  \left\{\bar{r}(\pi,p): \bar{c}_i(\pi,p)\le c^{ub}_i,~\forall i\in[M]\right\}$, we obtain that the index of an optimal policy is greater than $r\ust$.
\end{lemma}
\begin{IEEEproof}
	~Note that on the set $\cG$, the true MDP $p$ always belongs to $\cC_{\tau_{k}}$. If $\bar{c}_i(\pi,p)\le c^{ub}_i,~\forall i\in [M]$, we have
	\nal{
		\cI_k(\pi) & =  \max_{\te \in \cC_{\tau_k}} \Big\{ \bar{r}(\pi,\te): \bar{c}_i(\pi,\te) \le c^{ub}_i,~i\in [M]\Big\}\\
		&\ge \bar{r}(\pi,p).	
	}
\end{IEEEproof}

Upon combining Lemma~\ref{lemma:ub} and Lemma~\ref{lemma:index_lb}, we obtain the following result.
\begin{lemma}\label{lemma:learn}
	On the set $\cG$, the instantaneous regrets during $\cE_k$ can be bounded by $\delta_{k}(\pi_k)$.
\end{lemma}
\begin{IEEEproof}
It follows from Lemma~\ref{lemma:ub} that if $\bar{c}_i(\pi,p)>c^{ub}_i + \delta_{k}(\pi_k)$, then $\cI_k(\pi)=-\infty$. However, it is shown in Lemma~\ref{lemma:index_lb} that there is a policy $\tilde{\pi}$ whose index is greater than $r\ust$. Since index of $\pi$ is less than that of $\tilde{\pi}$, the policy $\pi$ will not be played by UCRL-CMDP. This means that $\bar{c}_i(\pi,p)\le c^{ub}_i + \delta_{k}(\pi_k)$, which shows that the instantaneous cost regret is bounded by $\delta_{k}(\pi_k)$.
	
To bound the reward regret, note that it was shown in Lemma~\ref{lemma:index_lb} that the index of an optimal policy is greater than $r\ust$, and hence the index $\cI_k(\pi_k)$ is greater than $r\ust$.~From Lemma~\ref{lemma:ub} we have $\cI_k(\pi_k)\le \bar{r}(\pi,p)+\delta_{k}(\pi_k)$. Hence we have that $\bar{r}(\pi,p)+\delta_{k}(\pi_k)\ge r\ust$, or $\bar{r}(\pi,p)\ge r\ust - \delta_{k}(\pi_k)$, which shows that the instantaneous reward regret is bounded by $\delta_{k}(\pi_k)$.
\end{IEEEproof}
We now use the result on instantaneous regrets in order to bound the cumulative regrets of UCRL-CMDP.
\begin{IEEEproof}[Proof of Theorem~\ref{regret:total}]
We will only discuss bound on reward regret, since the bound on cost regrets can be derived using similar steps.~Note that the expected regret during $\cE_k$ can be written as follows,
\nal{
&\bE \left( \bE\left\{ \sum_{t\in \cE_k} r\ust - r(s_t,a_t) |\cF_{\tau_k}	\right\}\right)\\
&= \bE \left( \bE\left\{ \sum_{t\in \cE_k} r\ust - \bar{r}(\pi_k,p) \Big|\cF_{\tau_k}	\right\}\right)\\
& + \bE \left( \bE\left\{ \sum_{t\in \cE_k}\bar{r}(\pi_k,p) -r(s_t,a_t) \Big|\cF_{\tau_k}	\right\}\right)\\
&\le \bE \left( \bE\left\{ \sum_{t\in \cE_k} r\ust - \bar{r}(\pi_k,p) \Big|\cF_{\tau_k}	\right\}\right)\\
& + \bE \left( \bE\left\{ \sum_{t=1}^{\infty}|\bar{r}(\pi_k,p) -r(s_t,a_t)| \Big|\cF_{\tau_k}	\right\}\right)\\
& \le \bE \left( \bE\left\{ \sum_{t\in \cE_k} r\ust - \bar{r}(\pi_k,p) \Big|\cF_{\tau_k}	\right\}\right)+\frac{C}{1-\rho},
}
where the last inequality follows from~\eqref{condition:unichain_1}. Denote $\Delta\ur_k:=\bE\left\{ \sum_{t\in \cE_k} r\ust - \bar{r}(\pi_k,p) \Big|\cF_{\tau_k}	\right\}$ as the regret incurred during the $k$-th episode. It follows from the above discussion that the cumulative expected regret can be bounded as follows,
\alig{
\bE \Delta\ur(T) \le  \bE\left(\sum_{k=1}^{K} \Delta\ur_k \right) + K\frac{C}{1-\rho},\label{ineq:reg_dec}
}
where $K$ is the total number of episodes.~Henceforth we will only focus on bounding the first term in the r.h.s. above.~This is bounded separately on the sets $\cG,\cG^{c}_1,\cG^{c}_2$. 

We begin by bounding $\sum_{k=1}^{K} \Delta\ur_k $ on $\cG$. We have,
\alig{
	& \Delta\ur_k\le \delta_{k}(\pi_k) |\cE_k|  \notag\\
	& =  \left[\max_{s:s\in\cB_{\pi_k}} \sum_{a\in\cA} \pi_k(a|s)\frac{ \sqrt{2\log\left(T^{b} |\cS||\cA|\right) }}{\sqrt{N_k(s,a)}}\right] |\cE_k|\notag\\
& \le \sum_{(s,a):s\in\cB_{\pi_k}} |\cE_k|\pi_k(a|s)\frac{ \sqrt{2\log\left(T^{b} |\cS||\cA|\right) }}{\sqrt{N_k(s,a)}} \notag\\
& =4T_M \sum_{(s,a):s\in\cB_{\pi_k}} \frac{|\cE_k|}{2T_M}\frac{1}{2}    \pi_k(a|s)\frac{ \sqrt{2\log\left(T^{b} |\cS||\cA|\right) }}{\sqrt{N_k(s,a)}} \notag\\
& =4T_M \sum_{(s,a):s\in\cB_{\pi_k}} \bE(n_k(s)|\cF_k)  \frac{\pi_k(a|s) \sqrt{2\log\left(T^{b} |\cS||\cA|\right) }}{\sqrt{N_k(s,a)}} \notag\\
& +4T_M \sum_{(s,a):s\in\cB_{\pi_k}} \left\{\frac{|\cE_k|}{2T_M}\frac{1}{2} - \bE(n_k(s)|\cF_k)   \right\}\times \notag\\ 
&\qquad \frac{\pi_k(a|s) \sqrt{2\log\left(T^{b} |\cS||\cA|\right) }}{\sqrt{N_k(s,a)}}\notag\\
&\le 4T_M \sum_{(s,a):s\in\cB_{\pi_k}} \bE(n_k(s)|\cF_k) \frac{ \pi_k(a|s) \sqrt{2\log\left(T^{b} |\cS||\cA|\right) }}{\sqrt{N_k(s,a)}},\label{ineq:4}
}
where the first inequality follows from Lemma~\ref{lemma:learn}, and the last inequality follows from Lemma~\ref{lemma:suff_visits}.~We will now bound the term $\sum_{k=1}^{K} \sum_{(s,a):s\in\cB_{\pi_k}} \frac{\bE(n_k(s)|\cF_k)  \pi_k(a|s)}{\sqrt{N_k(s,a)}}$.~We have
\alig{
& \sum_{k=1}^{K} \frac{\bE(n_k(s)|\cF_k)  \pi_k(a|s)}{\sqrt{N_k(s,a)}} = \sum_{k=1}^{K} \frac{ \bE(n_k(s,a)|\cF_k) }{\sqrt{N_k(s,a)}} \notag\\
& = \sum_{k=1}^{K} \frac{ n_k(s,a) }{\sqrt{N_k(s,a)}}  +  \sum_{k=1}^{K} \frac{ \bE(n_k(s,a)|\cF_k) - n_k(s,a) }{\sqrt{N_k(s,a)}} \label{ineq:6}
}
As is shown in~\cite[p.~1578]{jaksch2010near}, the term $\sum_{k=1}^{K}\sum_{(s,a)} \frac{ n_k(s,a) }{\sqrt{N_k(s,a)}} $ can be bounded by $(\sqrt{2}+1)\sqrt{SAT}$, while from~\eqref{def:g2} we have that on $\cG_2$, the term $ \sum_{k=1}^{K} \frac{ \bE(n_k(s,a)|\cF_k) - n_k(s,a) }{\sqrt{N_k(s,a)}} $ is bounded by $T^{\beta} \sqrt{\log\Bigg( \frac{SAT}{\delta}\Bigg)}$. It follows from~\eqref{ineq:6} and the discussion above that on $\cG$ we have
\alig{
 \sum_{(s,a):s\in\cB_{\pi_k}}&\sum_{k=1}^{K} \frac{\bE(n_k(s)|\cF_k)  \pi_k(a|s)}{\sqrt{N_k(s,a)}} \le (\sqrt{2}+1)\sqrt{SAT} \notag \\
&+ T^{\beta} \sqrt{\log\Bigg( \frac{SAT}{\delta}\Bigg)}.\label{ineq:5}
}
Upon summing~\eqref{ineq:4} over episodes, and using~\eqref{ineq:5}, we obtain that the regret on $\cG$ can be bounded as follows, 
\alig{
\sum_{k=1}^{K} \Delta\ur_k \le 4T_M \Bigg((\sqrt{2}+1)\sqrt{SAT}+ T^{\beta} \sqrt{\log\Bigg( \frac{SAT}{\delta}\Bigg)}\Bigg).\label{ineq:G_bound}
}
This completes the analysis on $\cG$.

We now analyze the regret on $\cG^{c}_2$. From Lemma~\ref{lemma:conc_visits}, the probability of $\cG^{c}_2$ is bounded by $\delta$. On $\cG^{c}_2$, the sample path regret $\sum_{k=1}^{K}\Delta\ur_k$ can be trivially bounded by $T$, so that its contribution to the expected regret is bounded by $\delta T$.

To analyze the regret on $\cG^{c}_1$ we note that if the confidence ball $\cC_{\tau_{k}}$ at time $\tau_k$ fails, then the regret during $\cE_k$ can be bounded by the duration of $\cE_k$. Since $\tau_{k+1}-\tau_k =\lceil T^{\alpha}\rceil$, the regret during $\cE_k$ is bounded by $\lceil T^{\alpha}\rceil$.
~From Lemma~\ref{lemma:conc} we have that the probability with which confidence ball fails at time $t$ is upper-bounded by $\frac{2}{T^{2b-1}|\cS||\cA|}$. Hence, the expected regret from the failure of ball (in case an episode starts at $t$) at time $t$ is bounded by $\frac{2\lceil T^{\alpha}\rceil}{T^{2b-1}|\cS||\cA|}$, so that the cumulative expected regret is bounded by $\frac{2}{T^{2b-2}|\cS||\cA|}$.
\end{IEEEproof}

\section{Learning Under Bounds on Cost Regret}\label{sec:tune_regret}
The upper-bounds for the regrets of UCRL-CMDP in Theorem~\ref{regret:total} are the same for reward and $M$ costs regrets. However, in many practical applications, an agent is more sensitive to over-utilizing certain specific costs, as compared to the other costs. Thus, in this section, we derive algorithms which enable the agent to tune the upper-bounds on the regrets of different costs.~We also quantify the reward regret of these algorithms.
\subsection{Modified UCRL-CMDP}
Throughout this section we assume that $p$ satisfies the following.
\begin{assumption}\label{assum:2}
	For the MDP $p$, there exists a stationary policy under which the average costs are strictly below the thresholds $\{c^{ub}_i:i=1,2,\ldots,M\}$. More precisely, there exists an $\epsilon>0$ and a stationary policy $\pi_{feas.}$ such that we have $\bar{c}_i(\pi_{feas.})<c^{ub}_i-\epsilon, \forall i\in [M]$. Define
	\begin{align}\label{eq:eta_def}
		\eta := \min_{i\in [M]} \left\{c_i^{ub} - \epsilon-\bar{c}_i(\pi_{feas.}) \right\}.
	\end{align}
\end{assumption}

\begin{algorithm}
	\begin{algorithmic}
		\caption{Modified UCRL-CMDP}
		\label{algo:ucb_cmdp_modi}
		\State {\bfseries Input:} State-space $\cS$, Action-space $\cA$, Confidence parameter $\delta$, Time horizon $T$\\
		\State {\bfseries Initialize:} Set $t:=1$, and observe the initial state $s_1$.
		\For{ Episodes $k=1,2,\ldots$ }
		\State 
		\textbf{Initialize Episode }$\bm{k}$:
		\begin{enumerate}
			\item Set the start time of episode $k$, $\tau_k:=t$. For all state-action tuples $(s,a)\in\cS\times\cA$, initialize the number of visits within episode $k$, $n_{k}(s,a)=0$. 
			\item For all $(s,a)\in\cS\times\cA$ set $N_{\tau_k}(s,a)$, i.e., the number of visits to $(s,a)$ prior to episode $k$. Also set the transition counts $N_{\tau_k}(s,a,s^{\prime})$ for all $(s,a,s^{\prime})\in\cS\times\cA\times\cS$.
			\item Compute the empirical estimate $\hat{p}_t$ of the MDP as in~\eqref{eq:empirical_estimate_mdp}.
		\end{enumerate}
		\textbf{Compute Policy }$\bm{\tilde{\pi}_k}$:
		\begin{enumerate}
			\item Let $\cC_{\tau_k}$ be the set of plausible MDPs as in~\eqref{eq:ci_def}.
			\item Solve~\eqref{eq:ucb_mod_cmdp_obj}-\eqref{eq:ucb_mod_cmdp_c3} to obtain $\tilde{\pi}_k$. 
			\item In case ~\eqref{eq:ucb_cmdp_obj}-\eqref{eq:ucb_cmdp_c3} is infeasible, choose $\tilde{\pi}_k$ to be some pre-determined policy (chosen at time $t=0$).
		\end{enumerate}
		\State \textbf{Implement} $\bm{\tilde{\pi}_k}$:
		\While{$t-\tau_{k_t}< \lceil T^{\alpha}\rceil $}
		\State{
			\begin{enumerate}
				\item Sample $a_t$ according to the distribution $\tilde{\pi}_k(\cdot|s_t)$. Observe reward $r(s_t,a_t)$, and observe next state $s_{t+1}$.
				\item Update $n_k(s_t,a_t)=n_k(s_t,a_t)+1$.
				\item Set $t:=t+1$.
			\end{enumerate}
		}
		\EndWhile
		\EndFor
	\end{algorithmic}
\end{algorithm}
The modified algorithm maintains empirical estimates $\hat{p}_t$ and confidence intervals $\cC_t$~\eqref{eq:ci_def} in exactly the same manner as UCRL-CMDP (Algorithm~\ref{algo:ucb_cmdp}) does. It also proceeds in episodes, and uses a single stationary control policy within an episode. However, at the beginning of each episode $k$, it solves the following optimization problem, which is a modification of the problem~\eqref{eq:ucb_cmdp_obj}-\eqref{eq:ucb_cmdp_c3} that is solved by UCRL-CMDP. More concretely, the cost constraints~\eqref{eq:ucb_cmdp_c2} are replaced by the constraints~\eqref{eq:ucb_mod_c2} on the costs:
\begin{align}
	&\max_{\mu,p^{\prime}} \sum_{(s,a)\in\cS\times\cA} \mu(s,a)r(s,a),\label{eq:ucb_mod_cmdp_obj}\\
	&\mbox{ s.t. } \sum_{(s,a)\in\cS\times\cA} \mu(s,a)c_i(s,a) \le c^{ub}_i-d_i, ~i\in [M] \label{eq:ucb_mod_c2}\\
	&\sum_{a\in\cA}\mu(s,a) = \sum_{(s^{\prime},b)}\mu(s^{\prime},b) p^{\prime}(s^{\prime},b,s),~~\forall s\in\cS,\label{eq:ucb_mod_cmdp_c1}\\
	&\mu(s,a)\ge 0~~\forall (s,a), \sum_{(s,a)} \mu(s,a) = 1,\\
	&~~\qquad p\up \in \cC_{\tau_{k}}, \label{eq:ucb_mod_cmdp_c3}
\end{align}  
where, 
\alig{
	d_i :&= b_i \eps,\quad  i\in [M],\label{def:d_i}
}
and the parameters $b_i \in (0,1),i\in [M]$ are chosen by the agent.~If the LP~\eqref{eq:ucb_mod_cmdp_obj}-\eqref{eq:ucb_mod_cmdp_c3} is feasible, let $\tilde{\mu}_k$ be an optimal occupation measure obtained by solving it. In this case, the agent implements $SR(\tilde{\mu}_k)$ within $\cE_k$. However, if the LP is infeasible, then it implements a stationary controller that has been chosen at time $t=0$. This is summarized in Algorithm~\ref{algo:ucb_cmdp_modi}. We will analyze Algorithm~\ref{algo:ucb_cmdp_modi} under the following assumption on the underlying MDP $p$. 

We derive upper-bounds on regrets of modified algorithm in the following result.
\begin{theorem}\label{th:regret_vector_result_1}
	Consider the modified UCRL-CMDP with $\delta = 1\slash T$, $\alpha = 1\slash 3$ applied to an MDP $p$ that satisfies Assumption~\ref{assum:1} and Assumption~\ref{assum:2}. Then, the expected reward and cost regrets can be upper-bounded as follows:
\begin{align}
	& \bE \Delta\ur(T) \notag\\
	&\le 4 T_M\Bigg((\sqrt{2}+1)\sqrt{SAT}+ T^{2\slash 3} \sqrt{\log\Bigg( \frac{SAT}{\delta}\Bigg)}\Bigg)\notag \\
	&+\frac{C\lceil T^{2\slash 3}\rceil}{1-\rho}+\delta T + \frac{2}{T^{2b-2}|\cS||\cA|}+z T,
\end{align}
and
\begin{align}
	& \bE \Delta\uci(T) \notag\\
	&\le 4 T_M\Bigg((\sqrt{2}+1)\sqrt{SAT}+ T^{2\slash 3} \sqrt{\log\Bigg( \frac{SAT}{\delta}\Bigg)}\Bigg)\notag \\
	&+\frac{C\lceil T^{2\slash 3}\rceil}{1-\rho}+\delta T + \frac{2}{T^{2b-2}|\cS||\cA|}-b_i \eps T, i\in [M],
\end{align}
where $z = (\max_i b_i ) \frac{\hat{\eta}}{\eta}\eps$, $\eta$ is as in~\eqref{eq:eta_def} and 
$$
\hat{\eta}:= \max_{(s,a)\in\mathcal{S}\times\mathcal{A}}r(s,a) - \min_{(s,a)\in\mathcal{S}\times\mathcal{A}}r(s,a).
$$
\end{theorem}
\section{Proof of Theorem~\ref{th:regret_vector_result_1}}
Proof closely follows the proof of Theorem~\ref{regret:total}, hence we point out only the key differences. The modified UCRL algorithm assigns the following modified index to policy $\pi$, 
$$~\cI_k(\pi) :=  \max_{\te \in \cC_{\tau_k}} \Big\{ \bar{r}(\pi,\te): \bar{c}_i(\pi,\te) \le c^{ub}_i - d_i,~i\in [M]\Big\}.
$$
If for some $i$ we have $\bar{c}_i(\pi,\te) >c^{ub}_i - d_i,~\forall \te \in \cC_{\tau_{k}}$, then we set $\cI_k(\pi)=-\infty$.

The proof of next result is omitted since it is similar to that of Lemma~\ref{lemma:ub}.
\begin{lemma}\label{lemma:ub_1}
	Let $\pi$ be a stationary randomized policy. On the set $\cG$ we have that $\cI_k(\pi)=-\infty$ if $\bar{c}_i(\pi,p)>c^{ub}_i -d_i +\delta_{t}(\pi),~\mbox{for some } i\in [M]$. Also, $\cI_k(\pi)\le \bar{r}(\pi,p)+\delta_{t}(\pi)$.
\end{lemma}

The following result allows us to derive bounds on the instantaneous regrets.
\begin{lemma}\label{lemma:index_lb_1}
	If a stationary policy $\pi$ satisfies $\bar{c}_i(\pi,p) \le c^{ub}_i -d_i,~\forall i \in [M]$, then on $\cG$ its index satisfies $\cI_k(\pi) \ge\bar{r}(\pi,p)$. With $\pi$ set equal to the policy which solves the CMDP $\max_{\pi}  \bar{r}(\pi,p)$ such that $\bar{c}_i(\pi,p)\le c^{ub}_i - d_i,~\forall i\in[M]$,~on $\cG$ the index of such a policy satisfies $\cI_k(\pi) \ge r\ust - z$, where $z$ is as in Theorem~\ref{th:regret_vector_result_1}.
\end{lemma}
\begin{IEEEproof}
	We note that on the set $\cG$, the true MDP $p$ always belongs to $\cC_{\tau_{k}}$. Since $\bar{c}_i(\pi,p) \le c^{ub}_i - d_i,~\forall i\in[M]$ this means that the index of $\pi$ satisfies 
	\nal{
		\cI_k(\pi) & =  \max_{\te \in \cC_{\tau_k}} \Big\{ \bar{r}(\pi,\te): \bar{c}_i(\pi,\te) \le c^{ub}_i-d_i,~i\in [M]\Big\}\\
		&\ge \bar{r}(\pi,p).
	}
	It follows from Lemma~\ref{lemma:opt_bound} that the optimal value of the CMDP $\max_{\pi} \bar{r}(\pi,p)$, such that~$\bar{c}_i(\pi,p)\le c^{ub}_i-d_i,\forall i\in[M]$, is greater than or equal to $r\ust - z$. Hence, it follows from the discussion above that the index of the policy which is optimal for this CMDP is greater than or equal to $r\ust - z$.
\end{IEEEproof}
~As earlier, we bound the regret on the sets $\cG,\cG^{c}_1$ and $\cG^{c}_2$ separately. On $\cG$, the regret is bounded by the time spent playing sub-optimal policies. 
\begin{lemma}\label{lemma:learn_1}
	On the set $\cG$, for the modified UCRL-CMDP algorithm, the instantaneous reward regret can be bounded by $\delta_{t}(\pi_k)+z$, while the instantaneous cost regret associated with the $i$-th cost can be bounded by $\delta_{t}(\pi_k)-d_i$.
\end{lemma}
\begin{IEEEproof}
	It follows from Lemma~\ref{lemma:ub_1} that if $\bar{c}_i(\pi,p)>c^{ub}_i -d_i + \delta_{t}(\pi_k)$, then $\cI_k(\pi)=-\infty$. However, it is shown in Lemma~\ref{lemma:index_lb_1} that there is a policy $\tilde{\pi}$ whose index is greater than $r\ust-z$. Since index of $\pi$ is less than that of $\tilde{\pi}$, the policy $\pi$ will not be played by UCRL-CMDP. This means that $\bar{c}_i(\pi,p)\le c^{ub}_i -d_i + \delta_{t}(\pi_k)$, which shows that the instantaneous cost regret is bounded by $\delta_{t}(\pi_k)-d_i$.
	
	To bound the instantaneous reward regret, note that it was shown in Lemma~\ref{lemma:index_lb_1} that there is a policy with index greater than $r\ust-z$, and hence the index of $\pi_k$ is necessarily greater than $r\ust-z$.~Since from Lemma~\ref{lemma:ub_1} we have that the index of $\pi_k$ is upper-bounded by $\bar{r}(\pi,p)+\delta_{t}(\pi_k)$, we must have $\bar{r}(\pi,p)+\delta_{t}(\pi_k)\ge r\ust-z$, or $\bar{r}(\pi,p)\ge r\ust - \delta_{t}(\pi_k)-z$, which shows that the instantaneous reward regret is bounded by $\delta_{t}(\pi_k)+z$.
\end{IEEEproof}

\begin{IEEEproof}[Proof of Theorem~\ref{th:regret_vector_result_1}]
Since the proof closely follows that of Theorem~\ref{regret:total}, we only point out the key differences. The decomposition result~\ref{ineq:reg_dec} holds for reward as well cost regrets. Similarly, the regrets on $\cG^{c}_2$ and $\cG^{c}_1$ can be bounded by $\delta T$ and $\frac{2}{T^{2b-2}|\cS||\cA|}$ respectively. The only difference arises during bounding the terms $\sum_k \Delta\ur_k$ and $\sum_k \Delta\uci_k$. It follows from Lemma~\ref{lemma:learn_1} that the bound on $\sum_k \Delta\ur_k$ differs from~\eqref{ineq:4} by an additional term $z T$, and similarly that on $\sum_k \Delta\uci_k$ differs by $\eps b_i T$. The proof is then completed by summing the bounds on regrets over the sets $\cG,\cG^c_1,\cG^c_2$.
\end{IEEEproof}
\section{Achievable Regret Vectors}
Let $\bm{\lambda}\geq \bm{0}_{M}$. Consider the Lagrangian relaxation of~\eqref{eq:adapt_obj}-\eqref{eq:adapt_constr}, 
\begin{align}
	& \cL(\bm{\lambda};\pi) \notag\\
	&:= \liminf_{T\to\infty }\frac{1}{T} \mathbb{E}_{\pi}\sum_{t=1}^{T} \left\{r( s_t,a_t) +\bm{\lambda}\cdot \left( \bm{c^{ub}} - \bm{c}( s_t,a_t ) \right) \right\},
\end{align}
where $\bm{c}(s_t,a_t)$ is the vector that consists of costs $c_i(s_t,a_t),i\in[M]$.~Consider its associated dual function~\cite{bertsekas1997nonlinear}, $\mathcal{D}(\bm{\lambda}) := \max_{\pi} \mathcal{L}(\bm{\lambda};\pi)$, and the dual problem
\begin{align}
	\min_{\bm{\lambda} \ge \bm{0}}~\mathcal{D}(\bm{\lambda}).\label{ineq:dual_2}
\end{align}
Define the diameter $D(p)$ of MDP $p$ as follows, $D(p):= \max_{s,s\up}\min_{\pi} T^{\pi}_{s,s\up}$.~$D(p)$ is finite if $p$ is communicating~\cite{puterman2014markov}.
\begin{theorem}\label{th:lower_bound}
	Consider a learning algorithm $\phi$. Then, there is a problem instance such that
	the regrets $\Delta\ur(T), \{ \Delta\uci(T)\}_{i=1}^{M}$ under $\phi$ satisfy
	\begin{align}
		\bE_{\phi} \Delta\ur(T)  + \sum_{i=1}^{M} \lambda\ust_i \bE_{\phi} \Delta\uci(T)    \ge .015\cdot \sqrt{D(p)SAT},
	\end{align}
	where $\bm{\lambda\ust}$ is an optimal solution of the dual problem~\eqref{ineq:dual_2}.
\end{theorem}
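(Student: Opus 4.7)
The plan is to reduce the constrained lower bound to the known unconstrained-RL lower bound of Jaksch et al.\ (the one that underlies the analysis of UCRL2) by means of Lagrangian duality. The key observation is that if we introduce the Lagrangian-modified reward
\begin{align*}
\tilde{r}(s,a) := r(s,a) - \sum_{i=1}^{M} \lambda\ust_i\, c_i(s,a),
\end{align*}
then expanding~\eqref{def:cumu_rew} and~\eqref{def:cumu_cost} gives
\begin{align*}
\bE_{\phi}\Delta\ur(T) + \sum_{i=1}^{M}\lambda\ust_i\,\bE_{\phi}\Delta\uci(T) = \Bigl(r\ust - \sum_{i=1}^{M}\lambda\ust_i c^{ub}_i\Bigr)T - \bE_{\phi}\sum_{t=1}^{T}\tilde{r}(s_t,a_t).
\end{align*}
So the weighted regret on the left is nothing but the \emph{unconstrained} reward regret of $\phi$ in the MDP $p$ with reward $\tilde r$.

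Next I would invoke strong duality for the LP~\eqref{lp:1}-\eqref{lp_c3} to identify the constant on the right. Since the CMDP is feasible (Assumption~\ref{assum:1}) and the LP has finite value, $\cL(\lambda\ust;\pi\ust) = \cD(\lambda\ust) = r\ust$, so that $\max_{\pi}\bar{\tilde r}(\pi) = r\ust - \sum_i \lambda\ust_i c^{ub}_i$. Hence the displayed quantity is exactly $\max_\pi \bar{\tilde r}(\pi)\cdot T - \bE_{\phi}\sum_{t=1}^{T}\tilde r(s_t,a_t)$, i.e.\ the reward regret in an unconstrained average-reward MDP with bounded reward $\tilde r$ (boundedness follows because rewards, costs, and $\lambda\ust$ are all bounded by Assumption~\ref{assum:1} together with Assumption~\ref{assum:2}).

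To obtain the bound, I would then construct the problem instance by starting from the hard MDP family used in the Jaksch--Ortner--Auer lower bound for unconstrained UCRL2, which yields an MDP with $S$ states, $A$ actions, diameter $D(p)$, and a reward function against which any learning algorithm incurs expected regret at least of the stated order $\Omega\bigl(\sqrt{D(p)SAT}\bigr)$. I would take that reward as $\tilde r$, set $r := \tilde r$, and pick trivial costs (for instance $c_i \equiv 0$ with any $c^{ub}_i > 0$) so that every constraint is strictly slack, $\lambda\ust = \bm{0}_M$, and the weighted regret identity collapses to the unconstrained regret itself. The Jaksch et al.\ lower bound applied to $\phi$ on this constructed CMDP then produces the claimed constant $0.015$ and the desired inequality.

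The main obstacle, besides quoting the minimax construction and constants from Jaksch et al.\ correctly, is verifying that the Lagrangian identity actually reduces the \emph{same} learning algorithm $\phi$ (which receives only reward samples $r(s_t,a_t)$ and cost samples $c_i(s_t,a_t)$) to an algorithm for the unconstrained problem with reward $\tilde r$. This is fine in our construction because $\phi$'s action rule is measurable with respect to $\cF_t$ regardless of how one bookkeeps rewards, so the sample-path identity above holds deterministically and we may therefore invoke the unconstrained lower bound verbatim on the distribution of $(s_t,a_t)$ induced by $\phi$. A secondary, cosmetic issue is that the statement as written omits the $\sqrt{T\log T}$ factor visible in the contributions paragraph; I would reinstate it using the $\log T$ slack that Jaksch et al.'s construction carries.
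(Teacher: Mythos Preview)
Your approach is essentially the paper's: both write the Lagrangian identity expressing $\bE_{\phi}\Delta\ur(T) + \sum_i \lambda_i\,\bE_{\phi}\Delta\uci(T)$ as the unconstrained regret of $\phi$ against the Lagrangian-modified reward, invoke strong duality so that the benchmark equals $r^\star$, and then quote the Jaksch--Ortner--Auer lower bound (Theorem~5 of \cite{jaksch2010near}) to produce the hard instance and the constant $0.015$. The only difference is the order of specialization. The paper applies the Jaksch bound to the auxiliary problem for a generic $\lambda$ and afterwards sets $\lambda=\lambda^\star$, using Lemma~\ref{lemma:slater_sd} to conclude $r^\star(\lambda^\star)=r^\star$; you instead fix $\lambda^\star=\bm{0}_M$ up front by taking trivial costs, so the weighted regret collapses to the plain reward regret and the Jaksch construction applies verbatim. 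Your route is slightly cleaner in that it avoids any circularity between choosing the hard $p$ and choosing $\lambda^\star$ (which depends on $p$), and it makes the strong-duality step superfluous; the paper's route, on the other hand, makes visible that the inequality holds with the \emph{instance-dependent} weights $\lambda^\star$ rather than merely for some degenerate instance. Your observation about the missing $\sqrt{T}$ in the statement is correct (the paper's own proof carries $\sqrt{D(p)SAT}$); the additional $\log T$ factor you mention, however, is not part of the Jaksch lower bound and should not be added.
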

\begin{IEEEproof} We begin by considering an auxiliary reward maximization problem that involves the same MDP $p$, but in which the reward received at time $t$ by the agent is equal to $r(s_t,a_t)+\bm{\lambda}\cdot \left( \bm{c^{ub}} - \bm{c}( s_t,a_t ) \right)$ instead of $r(s_t,a_t)$. However, there are no average cost constraints in the auxiliary problem. Let $\phi^{\prime}$ be a history dependent policy for this auxiliary problem. Denote its optimal reward by $r^{\star}(\bm{\lambda})$. Then, the regret for cumulative rewards collected by $\phi^{\prime}$ in the auxiliary problem is given by 
	\begin{align*}
		r^{\star}(\bm{\lambda})~T - \mathbb{E}_{\phi^{\prime}}\left[\sum_{t=1}^{T} r( s_t,a_t) + \bm{\lambda}\cdot \left( \bm{c^{ub}} - \bm{c}( s_t,a_t ) \right)\right].
	\end{align*}
	It follows from Theorem~5 of~\cite{jaksch2010near} that the controlled transition probabilities $p(s,a,s\up)$ of the underlying MDP can be chosen so that this regret is greater than $.015\cdot \sqrt{D(p)SAT}$, i.e.,
	\nal{ 
		&r^{\star}(\bm{\lambda})~T - \mathbb{E}_{\phi^{\prime}}\left[\sum_{t=1}^{T} r( s_t,a_t)  +\bm{\lambda}\cdot \left( \bm{c}( s_t,a_t )-\bm{c^{ub}} \right)\right] \\
		&\geq .015\cdot \sqrt{D(p)SAT}.
	}
	We observe that any valid learning algorithm for the constrained problem is also a valid algorithm for the auxiliary problem. Thus, if $\phi$ is a learning algorithm for the problem with average cost constraints, then we have
	\ali{
		& r^{\star}(\bm{\lambda})~T - \mathbb{E}_{\phi}\left[\sum_{t=1}^{T} r(s_t,a_t)  +\sum_{i=1}^{M} \lambda_i~\left(c^{ub}_i -  c_i( s_t,a_t ) \right)\right]\notag \\ 
		& \geq .015\cdot \sqrt{D(p)SAT}.\label{ineq:auer_regret}
	}
	We now substitute~\eqref{eq:lagrange} in the above to obtain
	\nal{
		&\bE_{\phi}~\Delta\ur(T) + \sum_{i=1}^{M}\lambda_i~\bE_{\phi}~\Delta\uci(T) \\
		&\geq .015\cdot \sqrt{D(p)SAT} + ~r^{\star} T -  r^{\star}(\bm{\lambda})~T.
	}
	Since the expression in the r.h.s. is maximized for values of $\bm{\lambda}$ which are optimal for the dual problem~\eqref{ineq:dual_2}, we set it equal to $\bm{\lambda^{\star}}$, and then use Lemma~\ref{lemma:slater_sd} in order to obtain  
	\begin{align}
		\bE_{\phi}~\Delta\ur(T) + \sum_{i=1}^{M}\lambda_i\bE_{\phi}~\Delta\uci(T) \geq .015\cdot \sqrt{D(p)SAT}.
	\end{align}
	This completes the proof. 
\end{IEEEproof}
\section{Simulation Results}
We compare the performance of the proposed UCRL-CMDP algorithm with the Actor-Critic algorithm for CMDPs that was proposed in~\cite{borkar2005actor}. Actor-Critic algorithms are a popular class of online learning algorithms~\cite{konda2000actor,peters2008natural,konda1999actor} that are based on multi-time-scale stochastic approximation~\cite{kushner2003stochastic,borkar2009stochastic}. We compare algorithms on the example presented in Section~\ref{sec:intro} in which the goal is to learn an efficient network controller. We begin by explaining the experiment setup.

\emph{Experiment Setup}: Consider the single-hop wireless network that was discussed in Section~\ref{sec:intro}, and consists of a single wireless node that transmits data packets to a receiver. The access point has to dynamically choose the transmission power $a_t$ at each time $t$. For simplicity, we let the action set $\cA$ be binary, and take the channel state to be static, i.e. it does not evolve or equivalently it assumes only a single value. Thus $a_t =0$ would mean that no packet was attempted transmission at time $t$, while $a_t =1$ would mean that a single packet would be delivered, with a probability equal to the channel reliability. The number of packets that arrive at time $t$ are denoted by $A_t$. We let $A_t \in \{0,1,2,3\}$ and assume that $A_t$ are i.i.d. across times. 
The probability vector associated with $A_t$ that describes its probability mass function is taken equal to $(.65,.2,.1,.05)$ for the experiments shown in Fig.~\ref{figure:1},~Fig.~\ref{figure:2}. The packet buffer is of a finite capacity, and can hold a maximum of $B$ packets. Thus, the dynamics of the queue length can be described as follows,
$$
Q_{t+1} = \left( Q_t + A_t - D_t  \right)^{+}   \wedge B,~t=0,1,2,\ldots,
$$
where for $x\in \bR$ we let $(x)^{+}:=\max\{x,0\}$, and $x\wedge B:= \min \{x,B\}$, while $D_t$ is the number that depart (are delivered to destination) at time $t$. In our experiments we use $B=6$, and take the channel reliability as $.9$. Hence, if $a_t=1$ then $D_t$ assumes the value $1$ with a probability $.9$. The associated CMDP can be stated as follows:
\ali{
	\max_{\pi}	\liminf_{T\to\infty }\frac{\bE_{\pi}\left( \sum\limits_{t=1}^{T}-a_t\right)}{T},\mbox{ s.t. } \limsup_{T\to\infty }\frac{\bE_{\pi}\left( \sum\limits_{t=1}^{T} Q_t\right)}{T} \le c^{ub}.\label{def:simu}
}

We now discuss the Actor-Critic algorithm for CMDPs. We begin with some notation that are required in order to discuss Actor-Critic algorithm. Let $\{a(n)\},\{b(n)\},\{c(n)\}$ be positive stepsize sequences satisfying $\sum_{n=1}^{\infty}a(n)=\infty,\sum_{n=1}^{\infty}b(n)=\infty,\sum_{n=1}^{\infty}c(n)=\infty$, $	\sum_{n=1}^{\infty}a^{2}(n)+\sum_{n=1}^{\infty}b^{2}(n)+\sum_{n=1}^{\infty}c^{2}(n)<\infty$,~and $\frac{b(n)}{a(n)} \to 0, \frac{c(n)}{b(n)} \to 0$.

In our experiments we use $a(n)=1\slash n$, $b(n)=1\slash (n\log n)$ and $c(n) = 1\slash (n\log^{2} n)$. Let $\mathcal{Q}:= \Big\{x\in \bR^{|\cA|-1}: x_i \ge 0~ \forall i, \sum_{j=1}^{|\cA|-1} x_j \le 1 \Big\}$ denote the simplex of subprobability vectors. Let $\Gamma(\cdot)$ denote the map that projects a vector onto $\mathcal{Q}$. Thus, if $x\in \mathcal{Q}$ then $\Gamma(x)=x$, otherwise $\Gamma(x)$ is the point from $\mathcal{Q}$ that is closest to $x$.

\textit{Actor-Critic Algorithm for CMDPs}: The algorithm carries out iterations for three quantities that evolve at different time-scales and are coupled. To begin with, it replaces the original constrained MDP by an unconstrained one by imposing a penalty upon constraint violation.  is held fixed, 
(or equivalently $r(s_t,a_t)-\tilde{\lambda}_t c(s_t,a_t) $, since the term $\tilde{\lambda}_t c^{ub}$ does not depend upon the controls) 
The instantaneous reward for this modified MDP is equal to $r(s_t,a_t)-\tilde{\lambda}_t \left(c(s_t,a_t) - c^{ub}\right)$ where  $\tilde{\lambda}_t\ge 0 $ is the price associated with the constraint violation. $\tilde{\lambda}_t\ge0$ is itself being tuned in an online way, though at a slower time-scale. $\tilde{\lambda}_t$ serves as an estimate of the optimal value of the dual variable for the original CMDP. In order to solve this unconstrained MDP, the algorithm keeps an estimate of the value function $V_t: \cS\mapsto \bR$, which is updated as follows,
\nal{
	& V_{t+1}(s) = V_t(s) + a(N_t(s))\id\{s_t = s\} \times \\
	& \Bigg[ r(s,u_t) +\tilde{\lambda}_t c(s,u_t) - V_t(s) - V_t(s\ust)+V_t(s_{t+1})\Bigg],
}
where $s\ust$ is a designated state. Let $\pi_t(a|s)$ denote the probability with which action $a$ is implemented in state $s$ at time $t$. Let $a\ust$ be a designated action. These probabilities are generated as follows. The algorithm maintains vectors $\hat{\pi}_{t}(s)=\left\{\hat{\pi}_{t}(a|s):a\in \cA\right\}$ for each state $s\in \cS$, and updates it as follows,
$$
\hat{\pi}_{t+1}(s) =  \Gamma\Big(  \hat{\pi}_{t}(s)  +  \star \Big), t=1,2,\ldots, 
$$
where,
\nal{
	& \star =\sum_{a\neq a\ust} b( N_t(s,a)) \times \id\left\{ s_t = s, a_t = a \right\} \hat{\pi}_{t}(s,a)\\
	& \times \Big[	V_t(s) + V_t(s\ust)	- r(s,a) + \tilde{\lambda}_t c(s,a)	-V_t(s_{t+1})	\Big]e_j,
}
where $e_{a}$ is the unit vector with a $1$ in the place corresponding to action $a$\footnote{We enumerate the available actions as $1,2,\ldots,|\cA|$.}. The probability for action $a\ust$ is computed as follows,
$$
\hat{\pi}_{t}(a\ust|s) = 1 - \sum_{a\neq a\ust} \hat{\pi}_{t}(a|s).
$$
The action probabilities $\pi_t$ are then generated from $\hat{\pi}_{t}$ as follows,
\nal{
	\pi_t(a|s) = (1-\epsilon_t)\hat{\pi}_t(a|s) +\frac{\epsilon_t}{|\cA|},~~a\in \cA,
}
where $\epsilon_t \to 0$. Finally, the price $\tilde{\lambda}_t$ is updated as follows,
$$
\tilde{\lambda}_{t+1} = \left[\tilde{\lambda}_{t} + \gamma_t\Big( c(s_t,a_t) - c^{ub} \Big) \right]^{+},
$$
where $c^{ub}$ is the threshold on average queue length as in~\eqref{def:simu}.

In our experiments we use $s\ust = B$, $a\ust = 0$ and $\epsilon_t = 1\slash t$.

\emph{Results}: Fig.~\ref{figure:1} compares the cumulative regrets incurred by these algorithms. We observe that the reward regret as well as cost regret of UCRL-CMDP are low. We observe a serious drawback of the Actor-Critic algorithm's performance, that the cost regret is prohibitively high. We then vary the budget $c^{ub}$ on the average queue length. These results are shown in Fig.~\ref{figure:2}. Once again, we make a similar observation, that UCRL-CMDP is effective in balancing both, the reward regret $\Delta^{(R)}(t)$ and the cost regret $\Delta^{(1)}(t)$, while the Actor-Critic algorithm yields a high cost regret. In both of these experiments the probability vector of arrivals was held fixed at $(.65,.2,.1,.05)$. We vary this probability vector, and plot the regrets in Fig.~\ref{fig:cost_arrival_vary}. Once again, UCRL-CMDP outperforms the Actor-Critic algorithm. Though the reward regret of Actor-Critic algorithm is lower than that of the UCRL-CMDP algorithms, this occurs at the expense of an undesireable much larger cost regret. In contrast, the reward regret as well as cost regret of UCRL-CMDP is low. Plots are obtained after averaging over $100$ runs.

\begin{figure}[H]
	\centering
	\subfloat[Reward Regret]{ 
		\includegraphics[width=4.5cm]{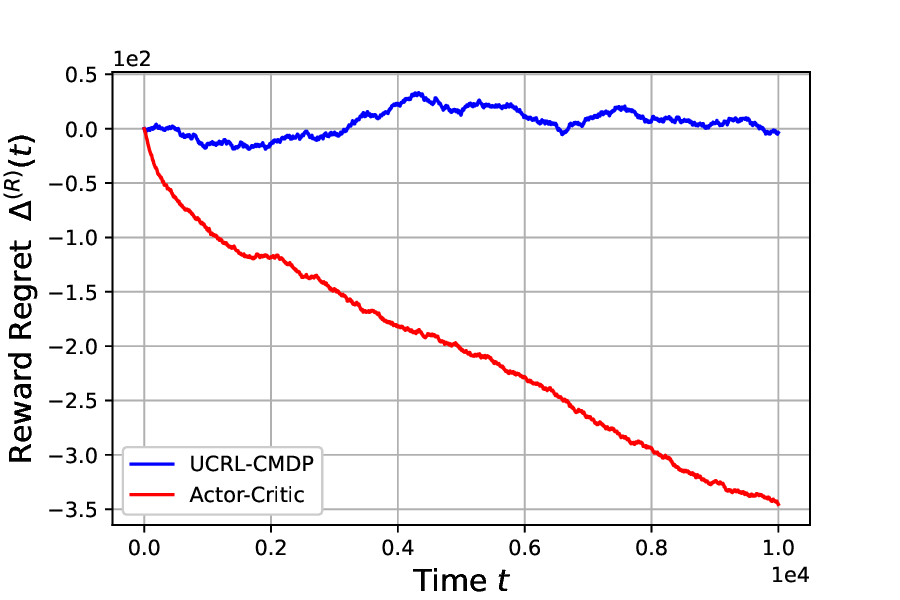}\label{fig:reward}
	}
	\subfloat[Cost Regret]{ 
		\includegraphics[width=4.5cm]{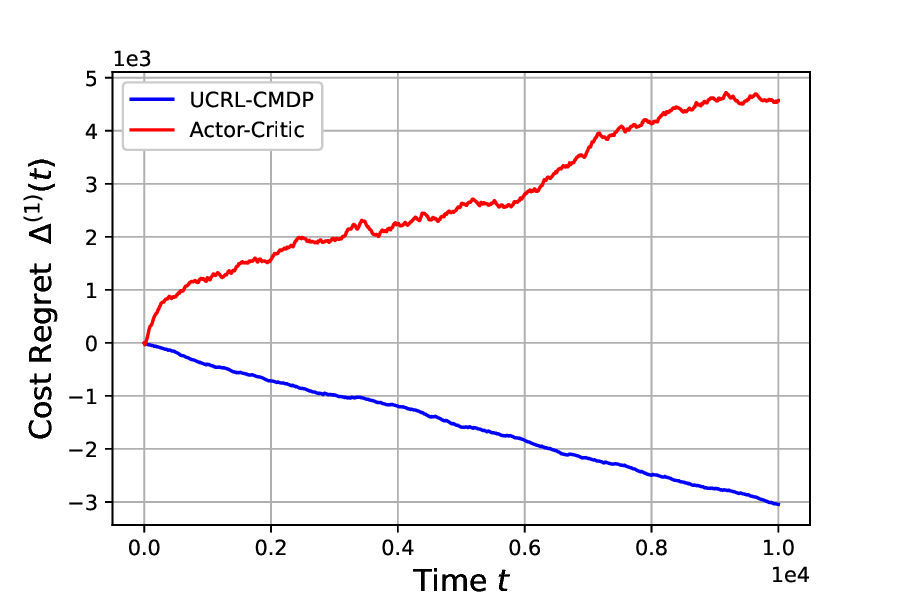}\label{fig:cost}
	}\\
	\caption{\begin{small}Plot of the reward regret (a) and cost regret (b), for the network in which the probability vector associated with arrivals is $(.65,.2,.1,.05)$, channel reliability is $.9$, and desired delay is $c^{ub} = 4.5$. Plots are obtained after averaging over $100$ runs.\end{small}}
	\label{figure:1}
\end{figure}

\begin{figure}[H]
	\centering
	\subfloat[Reward Regret]{ 
		\includegraphics[width=4.5cm]{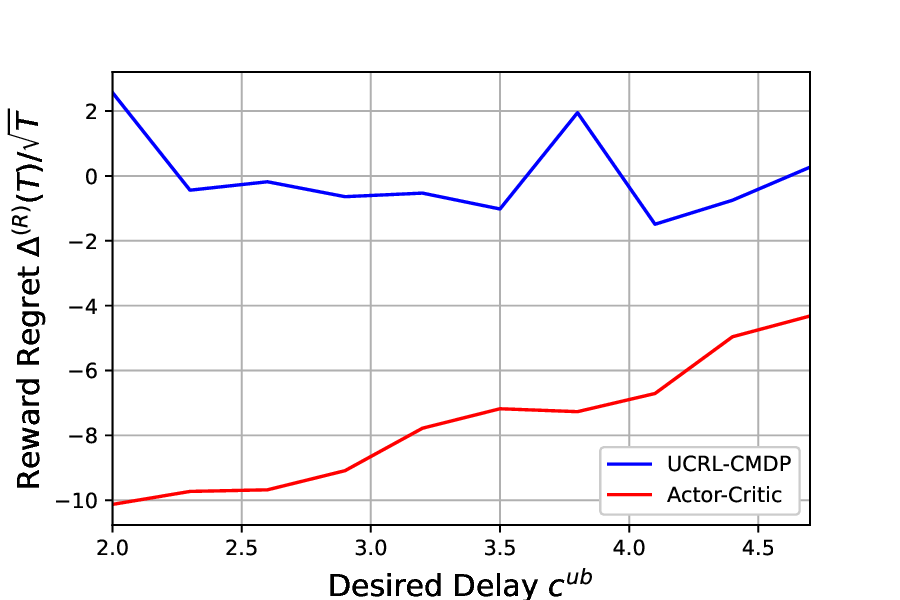}\label{fig:reward_des}
	}
	\subfloat[Cost Regret]{ 
		\includegraphics[width=4.5cm]{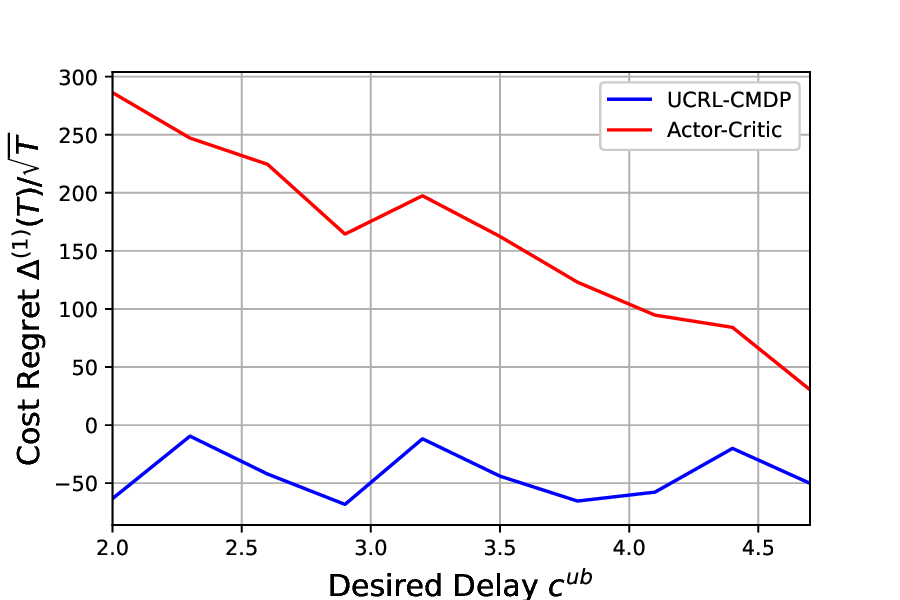}\label{fig:cost_des}
	}\\
	\caption{\begin{small}Plot of the normalized reward regret (a) and cost regret (b), as the desired delay $c^{ub}$ is varied. 
	\end{small}}
	\label{figure:2}
\end{figure}

\begin{figure}[H]
	\centering
	\subfloat[Reward Regret]{ 
		\includegraphics[width=4.5cm]{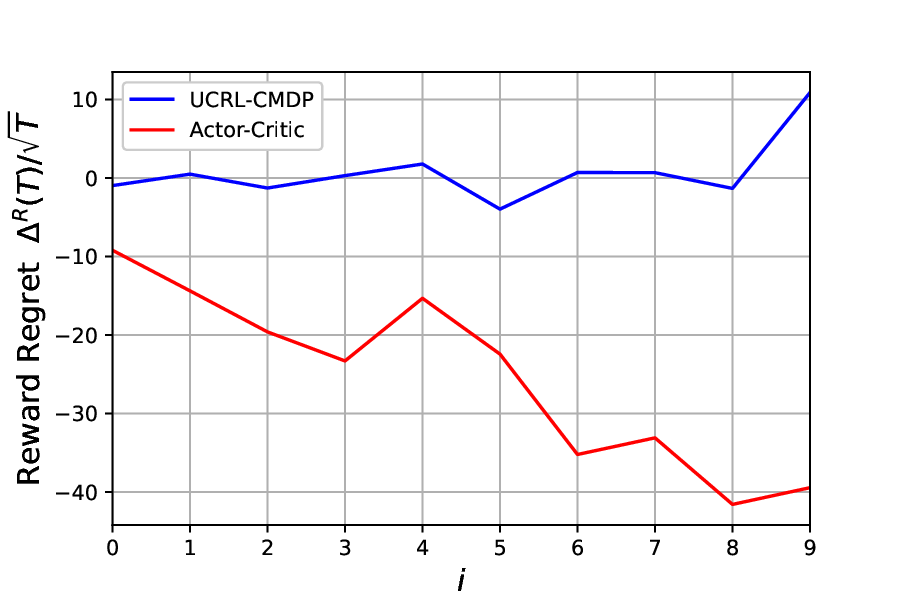}\label{fig:reward_arrival_vary}
	}
	\subfloat[Cost Regret]{ 
		\includegraphics[width=4.5cm]{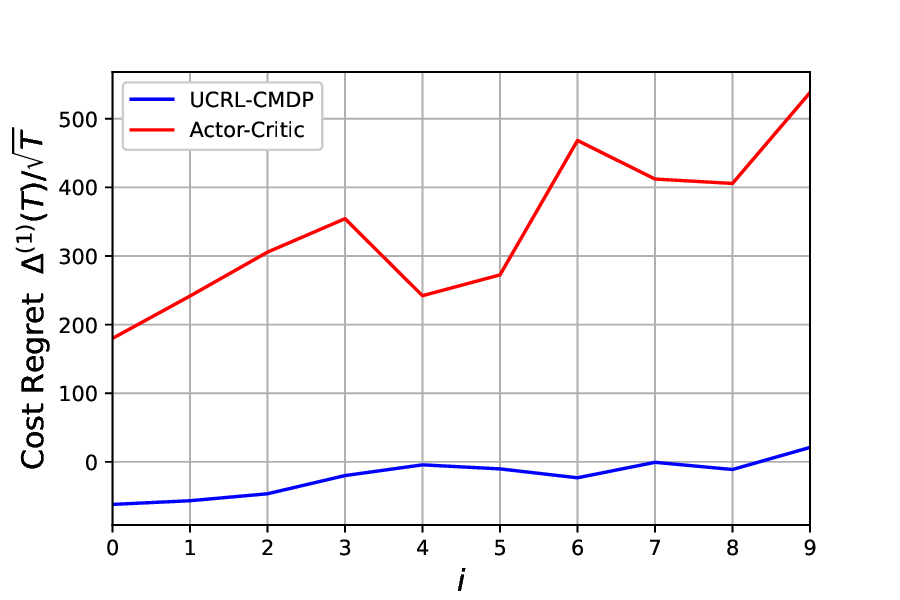}\label{fig:cost_arrival_vary}
	}\\
	\caption{\begin{small}Plot of the reward regret (a) and cost regret (b), as the probability distribution of the arrivals is varied. The probability vector of $A_t$ is equal to $(.65-.02i,.2,.1+.01i,.05+.01i)$, where the parameter $i$ is varied from $0$ to $9$. The desired delay $c^{ub}$ is held fixed at $4.5$, and channel reliability at $.9$.\end{small}}
	\label{figure:4}
\end{figure}

\section{Conclusions and Future Work}
In this work, we initiate a study to develop learning algorithms that simultaneously control all the components of the regret vector while controlling unknown MDPs. We devised algorithms that are able to tune different components of the cost regret vector, and also obtained a non-achievability result that characterizes those regret vectors that cannot be achieved under any learning rule.~In our work, we assume that the underlying MDP is unichain. An interesting research problem is to characterize the set of achievable regret vectors under the weaker assumption that the underlying MDP is communicating.

\bibliographystyle{IEEEtran}
\bibliography{references}
\appendices 
\section{Results Used in the Proof of Theorem~\ref{th:lower_bound}}
We derive some preliminary results that will be utilized in the proof of Theorem~\ref{th:lower_bound}.
\begin{lemma}\label{lemma:slater_sd}
	Consider the dual problem~\eqref{ineq:dual_2} associated with the CMDP~\eqref{eq:adapt_obj},~\eqref{eq:adapt_constr}, and let $\bm{\lambda^{\star}}$ be a solution of the dual problem. If Assumption~\ref{assum:2} holds true, then we have that
	\begin{align}
		\mathcal{D}(\bm{\lambda^{\star}}) = r^{\star},
	\end{align} 
	where $r\ust$ is the optimal reward of CMDP~\eqref{eq:adapt_obj},~\eqref{eq:adapt_constr}.
\end{lemma}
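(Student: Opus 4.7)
The plan is to deduce the claim from strong linear-programming duality applied to the LP reformulation~\eqref{lp:1}--\eqref{lp_c3} of the CMDP, after verifying that Assumption~\ref{assum:2} implies a Slater-type condition for this LP. The reason this works is that the CMDP is exactly equivalent (in its optimal value $r^\star$) to an LP in the occupation-measure variable $\mu$, and for LPs strict feasibility yields zero duality gap; all that remains is to recognize that the LP dual value coincides with $\mathcal{D}(\lambda^\star)$ as defined through the Lagrangian over policies.

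First I would form the Lagrangian of the LP~\eqref{lp:1}--\eqref{lp_c3} by dualizing only the cost constraints~\eqref{lp_c2}, keeping the Bellman-flow constraints~\eqref{lp_c1} and the simplex constraints as hard constraints. For a fixed multiplier $\lambda\ge \bm{0}_M$, the inner maximization
\[
\max_{\mu}\; \sum_{(s,a)}\mu(s,a)\Bigl[r(s,a)+\sum_{i=1}^M \lambda_i\bigl(c_i^{ub}-c_i(s,a)\bigr)\Bigr]
\]
subject to~\eqref{lp_c1} and~\eqref{lp_c3} is the standard average-reward LP for an unconstrained MDP with modified reward $r(s,a)+\sum_i\lambda_i(c_i^{ub}-c_i(s,a))$. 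Because the MDP is communicating (Assumption~\ref{assum:1}), this LP attains its maximum and its optimal value equals the optimal average reward of the corresponding unconstrained MDP, which in turn equals $\sup_\pi \mathcal{L}(\lambda;\pi)=\mathcal{D}(\lambda)$. Thus the LP dual function and the policy-based dual function $\mathcal{D}(\cdot)$ agree on $\lambda\ge\bm{0}_M$, and hence their minima over $\lambda\ge\bm{0}_M$ agree as well.

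Next I would invoke strong duality. Assumption~\ref{assum:2} supplies a stationary policy $\pi_{feas.}$ with $\bar c_i(\pi_{feas.})<c_i^{ub}-\epsilon$ for every $i$; its occupation measure $\mu_{\pi_{feas.}}$ is then a strictly feasible point of the LP~\eqref{lp:1}--\eqref{lp_c3}, which is Slater's condition in the LP setting (equivalently, a relative-interior feasible point). Since LPs with finitely many variables and constraints that admit a strictly feasible primal point have zero duality gap and attain both optima, we obtain $r^\star=\min_{\lambda\ge\bm{0}_M}\mathcal{D}(\lambda)$. Evaluated at an optimal dual solution $\lambda^\star$ (which exists by the same strong-duality theorem), this yields $\mathcal{D}(\lambda^\star)=r^\star$, as claimed.

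The only subtlety that requires care is the identification of $\mathcal{D}(\lambda)=\sup_\pi\mathcal{L}(\lambda;\pi)$ with the inner-LP value, because $\mathcal{L}$ is defined with a $\liminf$ over history-dependent policies while the LP ranges over stationary occupation measures. The standard resolution is that for a communicating MDP with bounded one-step rewards, the average reward of the best history-dependent policy equals that of the best stationary randomized policy and equals the LP value; this is the classical result recalled in Section~\ref{subsec:cmdp} and appears in~\cite{altman,puterman2014markov}. Invoking it closes the loop and completes the argument; I expect this equivalence between the policy formulation and the occupation-measure LP to be the main (though entirely standard) technical point to cite.
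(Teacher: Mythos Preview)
Your proposal is correct and follows essentially the same approach as the paper: the paper's own proof simply observes that Assumption~\ref{assum:2} yields strict feasibility of the CMDP, invokes Slater's condition to obtain strong duality, and concludes $\mathcal{D}(\lambda^\star)=r^\star$. Your version is more explicit about identifying the policy-based dual $\mathcal{D}(\lambda)$ with the LP dual via the occupation-measure reformulation, which is a helpful elaboration but not a different strategy.
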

\begin{IEEEproof}
	Under Assumption~\ref{assum:2}, the CMDP~\eqref{eq:adapt_obj}-\eqref{eq:adapt_constr} is strictly feasible, so that Slater's constraint~\cite{boyd2004convex} is satisfied, and consequently strong duality holds true. Thus, if $\lambda^{\star}$ solves the dual problem~\eqref{ineq:dual_2}, we then have that $\mathcal{D}(\bm{\lambda^{\star}}) = r^{\star}$.
\end{IEEEproof}

\begin{lemma}
	Let $\bm{\lambda}\ge \bm{0}_M$ and $\phi$ be a learning algorithm for the problem of maximizing cumulative rewards under average cost constraints. We then have the following,
	\begin{align}\label{eq:lagrange}
		& \bE_{\phi}\sum_{t=1}^{T} \left\{r( s_t,a_t) +\bm{\lambda}\cdot (\bm{c^{ub}} - \bm{c}(s_t,a_t )) \right\}\notag \\
		&\qquad = r^{\star}T  -\bE_{\phi} \Delta\ur(T) - \sum_{i=1}^{M}\lambda_i\bE_{\phi} ~\Delta\uci(T).
	\end{align}
\end{lemma}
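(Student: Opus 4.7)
The plan is to prove this identity by straight substitution of the definitions of the reward regret $\Delta\ur(T)$ and cost regrets $\Delta\uci(T)$ given in equations~\eqref{def:cumu_rew} and~\eqref{def:cumu_cost}, followed by linearity of expectation. There is no probabilistic argument or concentration bound required here; the lemma is an algebraic identity that gets promoted to the expected-value level.

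First I would rewrite the two regret definitions in the form that appears naturally inside the Lagrangian. From~\eqref{def:cumu_rew} we have $\sum_{t=1}^{T} r(s_t,a_t) = r\ust T - \Delta\ur(T)$, and from~\eqref{def:cumu_cost} we have $\sum_{t=1}^{T} \left( c^{ub}_i - c_i(s_t,a_t) \right) = -\Delta\uci(T)$ for each $i\in [1,M]$. I would then split the sum inside the expectation in~\eqref{eq:lagrange} into a reward piece and $M$ cost pieces, pull the (deterministic) multipliers $\lambda_i\ge 0$ outside, and substitute these two identities. This yields, pathwise,
\begin{align*}
\sum_{t=1}^{T}\left\{r(s_t,a_t) + \sum_{i=1}^{M}\lambda_i\bigl(c^{ub}_i - c_i(s_t,a_t)\bigr)\right\}
= r\ust T - \Delta\ur(T) - \sum_{i=1}^{M}\lambda_i\,\Delta\uci(T).
\end{align*}
Taking $\bE_{\phi}$ on both sides and using linearity of expectation (plus the fact that $r\ust T$ is a constant) finishes the proof.

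There is essentially no main obstacle: the only mild point worth flagging is that the identity holds pathwise for each realization of the trajectory $\{(s_t,a_t)\}_{t=1}^{T}$ under the learning algorithm $\phi$, so no measurability or interchange-of-sum issues arise, and the $\lambda_i$'s are fixed deterministic weights so they commute freely with the expectation. This is why the lemma is stated as an equality rather than an inequality, and why no further assumption (such as Assumption~\ref{assum:2} or the non-negativity of $\lambda$, beyond the existing hypothesis $\lambda\ge \bm{0}$) is invoked in its proof.
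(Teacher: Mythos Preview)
Your proposal is correct and matches the paper's own proof essentially line for line: both arguments split the left-hand side into the reward piece and the $M$ cost pieces, then substitute the definitions~\eqref{def:cumu_rew} and~\eqref{def:cumu_cost} of $\Delta\ur(T)$ and $\Delta\uci(T)$ and invoke linearity of expectation. The only cosmetic difference is that you first establish the identity pathwise and then take $\bE_{\phi}$, whereas the paper works at the expectation level throughout; the content is identical.
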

\begin{IEEEproof} We have,
	\begin{align*}
		&\bE_{\phi}\sum_{t=1}^{T} \left\{r( s_t,a_t) + \bm{\lambda}\cdot (\bm{c^{ub}} - \bm{c}(s_t,a_t ))\right\}\\
		&=  \bE_{\phi}\sum_{t=1}^{T} r( s_t,a_t) + \sum_{i=1}^{M} \lambda_i~\mathbb{E}_{\phi}\sum_{t=1}^{T} \left( c^{ub}_i - c_i( s_t,a_t ) \right)\notag\\
		&= r^{\star}T - \left(r^{\star}T - \bE_{\phi}\sum_{t=1}^{T} r( s_t,a_t)  \right)\\
		& - \sum_{i=1}^{M} \lambda_i~\bE_{\phi}\sum_{t=1}^{T} \left( c_i( s_t,a_t ) -  c^{ub}_i\right)\notag\\
		&= r^{\star}T  -\bE_{\phi} \Delta\ur(T) - \sum_{i=1}^{M}\lambda_i\bE_{\phi} ~\Delta\uci(T).
	\end{align*} 
\end{IEEEproof}
\section{Some Auxiliary Results}
\subsection{Perturbation Analysis of CMDPs}
We derive some results on the variations in the value of optimal reward of the CMDP~\eqref{eq:adapt_obj}-\eqref{eq:adapt_constr} as a function of the cost budgets $\bm{c^{ub}}$. Consider a vector $\bm{\hat{c}^{ub}}$ of cost budgets that satisfies
\begin{align}\label{ineq:c_range}
	c^{ub}_i-\epsilon \le \hat{c}^{ub}_i \le c^{ub}_i,~\forall i\in [M],
\end{align}
where $\epsilon>0$. Now consider the following CMDP in which the upper-bounds on the average costs are equal to $\{\hat{c}^{ub}_i\}_{i=1}^{M}$. 
\begin{align}
	\max_{\pi} &\liminf_{T\to\infty }\frac{1}{T} \mathbb{E}_{\pi}\sum_{t=1}^{T} r( s_t,a_t) \label{eq:cmdp_modf_obj}\\
	\mbox{ s.t. }&  \limsup_{T\to\infty }\frac{1}{T} \mathbb{E}_{\pi}\sum_{t=1}^{T} c_i( s_t,a_t ) \leq \hat{c}^{ub}_i\label{eq:cmdp_modf_constr}, i\in [1,M].
\end{align}
\begin{lemma}\label{lemma:lagrange_ub} 
	Let the MDP $p$ satisfy Assumption~\ref{assum:1} and Assumption~\ref{assum:2}. Let $\bm{\lambda}\ust$ be an optimal dual variable\slash Lagrange multiplier associated with the CMDP~\eqref{eq:cmdp_modf_obj}-\eqref{eq:cmdp_modf_constr}. Then, $\bm{\lambda}\ust$ satisfies $\sum_{i=1}^{M} \lambda\ust_i \le \frac{\hat{\eta} }{\eta}$,
	where the constant $\eta$ is as in~\eqref{eq:eta_def}, while $\hat{\eta}$ is as in Theorem~\ref{regret_vector_result_1}.
\end{lemma}
\begin{IEEEproof} Within this proof, we let $\pi^{\star}(\hat{c}^{ub})$ denote an optimal stationary policy for~\eqref{eq:cmdp_modf_obj}-\eqref{eq:cmdp_modf_constr}. Recall that the policy $\pi_{feas.}$ that was defined in Assumption~\ref{assum:2} satisfies $\bar{c}_i(\pi_{feas.})\le c^{ub}_i-\eta$.
	We have
	\begin{align*}
		&\max_{(s,a)\in\mathcal{S}\times\mathcal{A}}r(s,a) \geq  \bar{r}(\pi^{\star}(\hat{c}^{ub}))\\
		&= \bar{r}(\pi^{\star}(\hat{c}^{ub})) + \sum_{i=1}^{M} \lambda\ust_i \left(\hat{c}^{ub}_i-\bar{c}_i( \pi^{\star}(\hat{c}^{ub})  \right) \\
		&\geq\bar{r}(\pi_{feas.}) + \sum_{i=1}^{M}\lambda\ust_i \left(\hat{c}^{ub}-\bar{c}(\pi_{feas.}) \right) \\
		&\geq \min_{(s,a)\in \mathcal{S}\times\mathcal{A} }r(s,a) +  \sum_{i=1}^{M}\lambda\ust_i \left(\hat{c}^{ub}-\bar{c}(\pi_{feas.}) \right) \\
		&\geq \min_{(s,a)\in \mathcal{S}\times\mathcal{A} }r(s,a) + \eta \sum_{i=1}^{M} \lambda\ust_i,
	\end{align*}
	where the second inequality follows since a policy that is optimal for the problem~\eqref{eq:cmdp_modf_obj}-\eqref{eq:cmdp_modf_constr} maximizes the Lagrangian $\bar{r}(\pi) + \sum_{i=1}^{M} \lambda_i \left(\hat{c}^{ub}_i-\bar{c}_i( \pi)  \right)$ when the Lagrange multiplier $\bm{\lambda}$ is set equal to  $\bm{\lambda\ust}$~\cite{bertsekas1997nonlinear}. Rearranging the above inequality yields the desired result. 
\end{IEEEproof}

\begin{lemma}\label{lemma:opt_bound}
	Let the MDP $p$ satisfy Assumption~\ref{assum:1} and Assumption~\ref{assum:2}. If $r^{\star}(\bm{\hat{c}^{ub}})$ denotes optimal reward value of~\eqref{eq:cmdp_modf_obj}-\eqref{eq:cmdp_modf_constr}, and $r\ust$ is optimal reward of problem~\eqref{eq:adapt_obj}-\eqref{eq:adapt_constr}, then we have that
	\begin{align*}
		r^{\star} - r^{\star} (\bm{\hat{c}^{ub}}) \leq \left(\max_{i\in[1,M]} \left\{c^{ub}_i - \hat{c}^{ub}_i \right\} \right) \frac{\hat{\eta}}{\eta},
	\end{align*}
	where $\hat{\eta}$ is as in Theorem~\ref{regret_vector_result_1}, $\eta$ is as in~\eqref{eq:eta_def}, and $\bm{\hat{c}}$ satisfies~\eqref{ineq:c_range}.
\end{lemma}
\begin{IEEEproof} 
	As discussed in Section~\ref{subsec:cmdp}, a CMDP can be posed as a linear program.
	Since under Assumption~\ref{assum:2}, both the CMDPs~\eqref{eq:adapt_obj}-\eqref{eq:adapt_constr} and~\eqref{eq:cmdp_modf_obj}-\eqref{eq:cmdp_modf_constr} are strictly feasible, we can use the strong duality property of linear programs~\cite{bertsekas1997nonlinear} in order to conclude that the optimal value of the primal and the dual problems for both the CMDPs are equal. Thus,
	\begin{align}
		r^{\star} &= \sup_{\pi} \inf_{\lambda} ~~\bar{r}(\pi)+\sum_{i=1}^{M} \lambda_i\left(c^{ub}_i - \bar{c}_i(\pi)\right),\label{ineq:12}\\
		r^{\star} (\bm{\hat{c}^{ub}}) &= \sup_{\pi} \inf_{\lambda}~~ \bar{r}(\pi)+\sum_{i=1}^{M} \lambda_i \left(\hat{c}_i^{ub} - \bar{c}_i(\pi)\right).\label{ineq:13}
	\end{align}
	Let $\pi^{(1)},\pi^{(2)}$ and $\lambda^{(1)},\lambda^{(2)}$ denote optimal policies and vector consisting of optimal dual variables for the two CMDPs. It then follows from~\eqref{ineq:12} and~\eqref{ineq:13} that, 
	\begin{align*}
		r^{\star} &\leq  \bar{r}(\pi^{(1)})+\sum_{i=1}^{M}\lambda^{(2)}_i  \left( c^{ub}_i- \bar{c}_i(\pi^{(1)}) \right),\\
		\mbox{ and }~~r^{\star} (\bm{ \hat{c}^{ub}}) &\geq \bar{r}(\pi^{(1)})+ \sum_{i=1}^{M} \lambda^{(2)}_i \left(\hat{c}^{ub}_i- \bar{c}_i(\pi^{(1)}) \right).
	\end{align*}
	Subtracting the second inequality from the first yields
	\begin{align*}
		r^{\star} - r^{\star} (\bm{c^{ub}}) &\leq \sum_{i=1}^{M} \lambda^{(2)}_i\left( c^{ub}_i - \hat{c}^{ub}_i \right)\\
		&\leq \left(\max_{i\in [1,M]} \left\{c^{ub}_{i} - \hat{c}^{ub}_i \right\} \right) \left( \sum_{i=1}^{M}\lambda^{(2)}_i\right)\\
		&\leq  \left(\max_{i\in [1,M]} \left\{c^{ub}_{i} - \hat{c}^{ub}_i \right\} \right) \frac{\hat{\eta}}{\eta},
	\end{align*}
	where the last inequality follows from Lemma~\ref{lemma:lagrange_ub}. This completes the proof. 
\end{IEEEproof}
\subsection{Sensitivity of Markov Chains}
The following result is essentially Corollary 3.1 of~\cite{mitrophanov2005sensitivity}. Consider a finite-state Markov chain with transition probabilities $\{\tilde{p}(s,s\up):s,s\up\in\cS\}$. Let $P^{(t)}_{s}$ be the probability distribution at time $t$ when it starts in state $s$ at time $0$.
\begin{theorem}\label{th:mitro}
	Assume $\|\tilde{P}^{(t)}_{s} - \tilde{P}^{(\infty)}_{s}\|\le C\rho^{t},~t\in\bN$. Consider a Markov chain with transition probabilities $\tilde{q}(s,s\up)$. We have 
	$$
	\|\tilde{P}^{(\infty)} - \tilde{Q}^{(\infty)} \| \le \left( \hat{n} + \frac{C\rho^{\hat{n}}}{1-\rho}  \right) \|\tilde{p}-\tilde{q}\|,
	$$
	where $\hat{n}:= \lceil \log_{\rho} C^{-1}\rceil$.
\end{theorem}

\end{document}